\pgfplotsset{compat=1.10}
\newtheorem{theorem}{Theorem}[section]
\newtheorem{definition}{Definition}[section]
\newtheorem{example}{Example}[section]
\newcounter{numrellocal}
\renewcommand{\thenumrellocal}{\arabic{numrellocal}}
\newcounter{numrelglobal}
\newcommand{\numrel}[2]{
  \stepcounter{numrellocal}
  \refstepcounter{numrelglobal}
  \ltx@label{#2}
  \overset{(\thenumrellocal)}{#1}
}
\newcommand{\p}{\mathbb{P}}
\newcommand{\E}{\mathop{\mathbb{E}}}
\newcommand{\cX}{\mathcal{X}}
\newcommand{\cY}{\mathcal{Y}}
\newcommand{\cG}{\mathcal{G}}
\newcommand{\cD}{\mathcal{D}}
\newcommand{\one}{\mathbbm{1}}
\title{The Relationship between No-Regret Learning and Online Conformal Prediction
}
\author[1]{Ramya Ramalingam}
\author[1]{Shayan Kiyani}
\author[1]{Aaron Roth}
\affil[1]{Department of Computer and Information Sciences, University of Pennsylvania}
\begin{document}
\maketitle

\begin{abstract}
Existing algorithms for online conformal prediction---guaranteeing marginal coverage in  adversarial settings---are variants of online gradient descent (OGD), but their analyses of worst-case coverage do not follow from the regret guarantee of OGD. What is the relationship between no-regret learning and online conformal prediction? We observe that although standard regret guarantees imply marginal coverage in i.i.d. settings, this connection fails as soon as we either move to adversarial environments or ask for group conditional coverage. On the other hand, we show a tight connection between \emph{threshold calibrated} coverage and swap-regret in adversarial settings, which extends to group-conditional (multi-valid) coverage. We also show that algorithms in the \emph{follow the perturbed leader} family of no regret learning algorithms (which includes online gradient descent) can be used to give group-conditional coverage guarantees in adversarial settings for arbitrary grouping functions. Via this connection we analyze and conduct experiments using a multi-group generalization of the ACI algorithm of \citet{gibbs2021adaptive}. 
\end{abstract}

\section{Introduction}
\label{sec:intro}

In  prediction problems over a label space $\cY$, a popular method for quantifying uncertainty is to produce \emph{prediction sets} $C(x) \subseteq \cY$ that contain subsets of the label space. Given features $x$, the intended semantics of $C(x)$ is that the true label $y$ will fall into the prediction set (i.e. will be \emph{covered} by the prediction set) with some specified probability, say $90\%$. A-priori producing prediction sets is a very high dimensional problem: there are $2^{|\cY|}$ possible prediction sets, which becomes intractable to enumerate over for even moderately large label spaces. But a key insight of the conformal prediction literature (see e.g. \cite{angelopoulos2021gentle}) is that given an arbitrary \textit{non-conformity score} $f : \cX \times \mathcal{Y} \to \mathbb{R}_{\geq 0}$, there is a 1-dimensional family of nested prediction sets defined as $C_\tau(x) = \{y \in \cY : f(x, y) \leq \tau\}$ that we can optimize over, and --- simply by adjusting $\tau$, we can obtain \emph{marginal} coverage at any desired rate $q \in (0,1)$. If there is a data distribution $\cD$, marginal coverage at a rate of $q$ corresponds to the guarantee that $\Pr_{(x,y) \sim \cD}[y \in C_\tau(x)] = q$. Stronger guarantees of \emph{group} conditional coverage and (threshold-calibrated) ``multivalid'' coverage have also been recently developed \cite{jung2021moment,gupta2022online,bastanipractical,jung2023batch,noarov2023high,gibbs2023conformal}, which ask for coverage to hold conditionally on various events. These methods involve learning a threshold \emph{model} $\tau:\cX\rightarrow \mathbb{R}$, and producing prediction sets of the form $C_\tau(x) = \{y : f(x) \leq \tau(x)\}$. Group conditional coverage starts with a collection of groups $g_1,\ldots,g_k$ represented as indicator functions $g_i:\cX\rightarrow \{0,1\}$ (which can be arbitrary and intersecting) and asks that $\E_{(x,y) \sim \cD}[y \in C_\tau(x) | g_i(x)=1] = q$ for each group $g_i$. Multivalid coverage asks that the prediction sets simultaneously satisfy group conditional coverage while also being \emph{threshold calibrated} --- i.e. it further conditions on the threshold value $\tau(x) = v$ (in a manner similar to calibration), and asks that for all groups $g_i$ and all threshold values $v$: $\E_{(x,y) \sim \cD}[y \in C_\tau(x) | g_i(x)=1, \tau(x) = v] = q$. Conditioning on the threshold value prevents coverage from being obtained in an uninformative way by predictors which ``hedge'' by mixing over different threshold values with very different coverage rates \citep{bastanipractical,jung2023batch}.
The algorithm proposed by \citet{jung2023batch} for obtaining group conditional coverage over a set of groups learns $\tau(x)$ by minimizing pinball loss over the class of linear combinations of group indicator functions. 

It is also possible to obtain coverage guarantees in online adversarial settings in which there is no distribution $\cD$, but instead an arbitrary sequence of examples $(x_t,y_t)$ that arrive sequentially. Here we ask for empirical coverage --- the threshold (or function) $\tau_t$ can now be updated over time, and marginal coverage over $T$ rounds corresponds to the requirement that $1/T \sum_{t=1}^T \mathbf{1}[y_t \in C_{\tau_t}(x_t)] = q$. Group conditional and multivalid coverage can be similarly defined in the sequential setting. \citet{gibbs2021adaptive} and \citet{gupta2022online} independently studied coverage in online adversarial settings and proposed quite different algorithms. \citet{gibbs2021adaptive} give a very lightweight algorithm which implements online gradient descent on the pinball loss, and prove that it guarantees marginal coverage. This algorithm chooses $\tau_t$ every day independently of the context $x_t$. They give a custom analysis of the coverage properties of their algorithm, however, and do not derive them from the regret guarantees of online gradient descent---i.e. the guarantee that the cumulative pinball loss of online gradient descent is no larger than the best single threshold would have obtained in hindsight.  
\citet{gupta2022online} (later refined by \citet{bastanipractical}) on the other hand give a more complex algorithm modeled on techniques for sequential calibration, and prove that it obtains multivalid coverage with respect to an arbitrary collection of group functions. As it promises group conditional coverage, it necessarily chooses $\tau_t$ as a function of $x_t$. This suggests a number of interesting questions:
\begin{enumerate}
    \item If a sequence of thresholds $\tau_t$ has no \emph{regret} with respect to the pinball loss, does this on its own guarantee coverage? Are there circumstances in which it does? Is there a stronger form of regret that does?
    \item Can algorithms for guaranteeing \emph{group conditional} regret with respect to pinball loss (e.g. \cite{blum2020advancing,lee2022online}) similarly be used to give group conditional or multivalid coverage guarantees? 
    \item Alternately, if the guarantees of \citet{gibbs2021adaptive} do not follow from the regret guarantee of online gradient descent, can we identify a broader class of algorithms that offer these guarantees and generalize them to offer group conditional (rather than just marginal) coverage guarantees?
\end{enumerate}

In this paper we provide answers to these questions.
\subsection{Our Results}
\label{ref:subsec:def}
\subsubsection{Regret and Coverage}
First we consider the relationship between different kinds of regret that a sequence of thresholds $\tau_t$ can have with respect to the pinball loss objective, and how they correspond to coverage guarantees. 
\paragraph{External Regret} A sequence of thresholds $\tau_1,\ldots,\tau_T$ is said to have no \emph{external} regret if its average pinball loss is no larger than that of the best fixed threshold $\tau^*$ chosen in hindsight. This is the kind of regret guarantee offered by algorithms like online gradient descent \citep{zinkevich2007regret} and multiplicative weights \citep{arora2012multiplicative}. We observe  that in adversarial settings, a no external regret guarantee on the thresholds $\tau_t$ does \emph{not} guarantee non-trivial coverage (similar observations have previously been made \citep{gibbs2021adaptive}), but show that it does if the algorithm chooses its threshold independently of any context $x_t$ (as ACI does) and outcomes $y_t$ are drawn i.i.d. from an unknown distribution $\cD$. We then turn our attention to \emph{group conditional} regret guarantees. A group conditional external regret guarantee promises no external regret not just marginally over the whole sequence of rounds $\{1,\ldots,T\}$, but simultaneously on each subsequence $S(g_i) = \{t : g_i(x_t)=1\}$ corresponding to rounds on which the examples are members of group $g$. Algorithms promising no group conditional regret (such as \cite{blum2020advancing, acharyaoracle}) must receive \emph{context} $x_t$ at each round before they make their prediction specifying which groups the current example is a member of. We show that even when the examples $(x_t,y_t)$ are drawn i.i.d. from a distribution $\cD$, contextual algorithms obtaining no external regret (and hence any algorithm obtaining no group conditional external regret) do not necessarily obtain any non-trivial  coverage bounds --- because even in i.i.d. settings, the context can correlate the prediction and the outcome.
\paragraph{Swap Regret} We then turn our attention to \emph{swap regret}, which corresponds to a guarantee of no external regret conditional on the value of the threshold played --- i.e.  a no swap regret guarantee corresponds to a guarantee of no external regret simultaneously on each subsequence $S(v) = \{t : \tau_t = v\}$ defined by threshold values $v$. There exist many efficient algorithms for guaranteeing no swap regret for convex losses (and it has a close connection to the computation of correlated equilibrium in game theory) \cite{blum2007external,foster1999regret,dagan2024external,peng2024fast}. There also exist efficient algorithms for obtaining \emph{group-conditional} swap regret for arbitrary polynomially sized collections of intersecting groups \cite{lee2022online,noarov2023high}. We show that (under mild smoothness assumptions on the distribution), threshold calibrated coverage is equivalent to swap regret in the sense that any algorithm for guaranteeing no swap regret with respect to the pinball loss produces thresholds that guarantee threshold calibrated coverage at the target rate, and vice versa. This tight connection carries over to group-conditional swap regret --- group conditional swap regret is equivalent to multivalid coverage over the same group structure. This connection holds even for algorithms that use context. This gives new algorithms for guaranteeing group conditional multivalid coverage.

\subsubsection{Coverage Guarantees Beyond Regret}
We then turn our attention to generalizations of the ``ACI'' guarantee that \citet{gibbs2021adaptive} prove for online gradient descent on the (1 dimensional) pinball loss. \citet{gibbs2021adaptive} analyze their algorithm by showing that 1) the marginal mis-coverage rate is proportional to the magnitude of the threshold used at the final iterate, and that 2) all iterates (and so in particular the final one) are bounded. We generalize this result in two ways. First, given a collection of groups $\cG$, we consider multi-dimensional problems in which we minimize the pinball loss of a function $\tau_t(x) = \langle \theta^t, g(x_t) \rangle$ defined as a $|\cG|$-dimensional linear function of the group indicator functions (mirroring the form of $\tau(x)$ used to obtain group conditional coverage in batch settings in \cite{jung2023batch}).  We show that if we optimize the pinball loss of $\tau_t(x)$ using any algorithm from the ``follow the regularized leader'' (FTRL) family of no-regret algorithms \cite{shalev2012online} (a family that includes online gradient descent, but also multiplicative weights and many other no regret learning algorithms), then the coverage rate within each group $g_i$ can be bounded as a function of the magnitude of $\theta^T_i$ (the coordinate of the parameter vector corresponding to group $i$) and the gradient of the regularization function used to instantiate FTRL. This generalizes the bound proven in \cite{gibbs2021adaptive} for 1-dimensional online gradient descent (which is an instance of FTRL regularized by the Euclidean norm). We then prove that when using $|\cG|$-dimensional online gradient descent for group conditional coverage, it is possible to bound the magnitude of the maximum coordinate of $\theta^T$ by $O(\sqrt{T})$ even when the group functions need not be binary, and can be general weighting functions $g_i:\cX \rightarrow [0,1]$. This implies a $O(\sqrt{T})$ group conditional coverage bound. We show that this is tight (even in 1-dimension) for \emph{real valued} weighting functions by demonstrating an $\Omega(\sqrt{T})$ lower bound. Finally, we perform an experimental evaluation of this algorithm, and compare it to the online algorithm for guaranteeing multivalid coverage given by \cite{bastanipractical}. We show that our method converges faster to the desired coverage rate. Further, though our upper-bound on the rate of the maximum coordinate of $\theta_T$ grows with $T$, empirically we see in each experiment (using binary groups) that it grows much slower and remains very small over the full transcript. We conjecture (but cannot prove) that for binary groups,  the norm of $\theta_T$ can be bounded by a much more slowly growing function of $T$ (or perhaps can be bounded only as a function of $k$, the number of groups, independently of $T$).


\subsection{Additional Related Work}
Online conformal prediction was introduced by \citet{gibbs2021adaptive}, who gave the ``ACI'' (Adaptive Conformal Inference) algorithm, and noted that it was an instantiation of 1-dimensional online gradient descent on the pinball loss --- but that the coverage bound did not follow from the standard regret analysis of online gradient descent. This spurred a number of follow up works that modified or refined the original ACI analysis \citep{gibbs2022conformal,feldman2022achieving,lekeufack2024conformal,angelopoulos2024online,bhatnagar2023improved}, some of them by making explicit connections to algorithms which guarantee more refined adaptive regret bounds \citep{gibbs2022conformal,bhatnagar2023improved} --- but the worst-case coverage bounds are never derived via the regret bounds, which are used to make auxiliary claims (such as convergence to the true quantile of the loss in stationary or slowly changing environments). 

In a parallel line of work, \citet{gupta2022online} introduced the problem of online uncertainty quantification in the form of mean, variance, and quantile estimation, using techniques deriving from the online calibration literature \citep{foster1998asymptotic}. \citet{bastanipractical} gave a refinement of their quantile calibration technique to give an online conformal prediction method that gave conditional guarantees of various sorts. Coverage bounds from algorithms of this sort follow from quantile-calibration arguments. 

\cite{romano2020malice} introduced the problem of group conditional coverage and studied it for disjoint groups. \cite{foygel2021limits} consider intersecting groups and propose running separate algorithms for each group, and for examples that are in multiple groups, using the most conservative threshold amongst each of the group-specific  algorithms. \cite{jung2021moment} give the first non-conservative method for getting group conditional coverage for intersecting groups, by adapting ideas from multicalibration \citep{hebert2018multicalibration} to calibrate to \emph{moments} of the score function, conditional on group membership. \cite{gupta2022online} give algorithms for group-conditional quantile multicalibration, and show how this can be used to  give tight ``multivalid'' confidence intervals. \cite{bastanipractical} and \cite{jung2023batch} apply these ideas explicitly to conformal prediction. \cite{deng2023happymap} also show how to generalize multicalibration to give group conditional guarantees in conformal prediction. \cite{gibbs2023conformal} give a variant of the algorithm from \cite{jung2023batch} which gives coverage guarantees in expectation over the calibration set, rather than PAC-style guarantees as in \cite{jung2023batch}. \cite{noarov2023high} give online algorithms using ideas from high dimensional calibration that are able to produce prediction sets whose validity holds subject to any set of conditioning events known at the time of prediction --- this includes group conditional validity, but also prediction-set size conditional validity, action conditional validity, among many other things. 

The characterization we give of threshold calibrated coverage by swap regret bounds on the pinball loss mirrors an equivalence between swap regret on the squared loss and (mean) calibration \cite{foster1998asymptotic,foster1999regret}. More generally the connection between calibration of different distributional quantities and their corresponding ``elicitation functions''  was made by \cite{NR23}.

Our generalization of the 1-dimensional ACI bounds to group conditional coverage bounds was independently and concurrently discovered by \citet{angelopoulos2025gradient}. \citet{angelopoulos2025gradient} develop their bounds as part of an elegant and  general theory of \emph{gradient equilibrium}, which they show is neither implied by nor implies external regret bounds, whereas we restrict attention to pinball loss and the online coverage problem. We generalize online gradient descent to algorithms derived as instantiations of follow the regularized leader, whereas they give a generalization to algorithms derived as an instantiation of proximal mirror descent.

\section{Definitions}
\label{sec:def}
Define a joint feature-label space $(\mathcal{X}, \mathcal{Y})$. In uncertainty quantification, one of our goals is to learn prediction sets $C: \mathcal{X} \to 2^{\mathcal{Y}}$ that satisfy certain probabalistic guarantees. Specifically, for some specified coverage rate $q$, we would like to produce sets that include the true label with probability $q$. Conformal prediction simplifies this problem by defining a collection of nested sets parametrized by a single variable (call it $\tau$) in the following manner:
\begin{equation}
    C_{\tau}(x) = \{y \in \mathcal{Y}: f(x, y) \leq \tau\}
\end{equation}
where $f: \mathcal{X} \times \mathcal{Y} \to [0,1]$ can be any arbitrary function, called a \textit{non-conformity score}. Fix a feature-label pair $(x,y)$. Given $x$, to choose a prediction set from this collection that includes the true label $y$, we must choose some $\tau \geq f(x,y)$.
Conformal methods can be viewed as prediction problems, to choose the ``correct" value of $\tau$--- a target quantile of the distribution on scores $f(x,y)$. In distributional settings, to achieve an exact coverage guarantee of the form $\mathbb{P}_{(x,y) \sim \mathcal{D}}[y \in C_\tau(x)] \approx q$, this can be done by using training data to get an estimate $\hat{\tau}$ of the $q$-th quantile of non-conformity score values induced by $\mathcal{D}$. 

The $q$-th quantile of a distribution minimizes the expectation of a convex function called the \textit{pinball loss}, defined as: 
\[
p_{q}(\hat{\tau}, \tau)=\begin{cases} 
      q(\tau - \hat{\tau}) & \text{ if } \tau \geq \hat{\tau} \\
      (q - 1)(\tau - \hat{\tau})& \text{ if } \tau <  \hat{\tau}
    \end{cases}
\]
The procedure used in conformal prediction can therefore also be seen as finding an estimate $\hat{\tau}$ of the true value $\tau$ that minimizes the expected pinball loss. But in the adversarial setting, there is no longer any distribution over which to estimate a fixed parameter $\hat{\tau}$. Instead, at each round $t$, we may be  given features $x_t$ (if we are in the ``contextual'' setting) and use it to predict a parameter $\hat{\tau}_t$ (and correspondingly the prediction set $C_{\hat{\tau}_t}$). In the non-contextual setting we must choose $\hat{\tau}_t$ solely based on the history thus far. Then we receive the true label $y_t$. Note that $y_t \in C_{\hat{\tau}_t}$ iff $\hat{\tau}_t \geq \tau_t$. Thus, we may view online conformal prediction as a sequential prediction task, where over $T$ rounds,
\begin{enumerate}
    \item The adversary chooses a joint distribution over contexts $x_t \in \cX$ and non-conformity score thresholds $\tau_t \in [0,1]$.
    \item The learner, given a realized context $x_t$, makes a prediction $\hat{\tau}_t$ of the score threshold.
    \item The learner receives a realized threshold $\tau_t$. 
\end{enumerate}
Given a desired coverage level $q$, the goal is to make predictions such that $\frac{1}{T} \sum_{t=1}^T \one[\hat{\tau}_t \geq \tau_t] \approx q$. Note that for simplicity, we abstract away the true label $y_t$ and non-conformity score $f_t$ here. Implicitly, $\tau_t = f_t(x_t, y_t)$. We assume also that all thresholds $\tau_t$ are bounded in $[0,1]$. In practice, non-conformity scores can be normalized to ensure this holds. 
\begin{definition}[Transcript]
A \emph{transcript} $\Pi_T = \{(x_t, \tau_t, \hat{\tau}_t)\}_{t=1}^T$ denotes a sequence of contexts, outcomes and predictions in the sequential prediction setting. Let $\Pi^* = (\mathcal{X} \times [0,1] \times [0,1])^*$ denote the set of all transcripts. 
\end{definition}
\subsection{Coverage}

\begin{definition}[Coverage, Coverage Error]
    Given a transcript $\Pi_T = \{(x_t, \tau_t, \hat{\tau}_t)\}_{t=1}^T$, the \emph{coverage} of $\Pi_T$ is defined as:
    \begin{equation*}
        \texttt{Cov}(\Pi_T) = \frac{1}{T} \sum_{t=1}^T \textbf{1}[\hat{\tau_t} \geq \tau_t]
    \end{equation*}
    For a desired coverage rate $q \in (0,1)$, we have \textit{coverage error} $\gamma$ with respect to $q$ if $|\texttt{Cov}(\Pi_T) - q| \leq \gamma$. 
\end{definition}
One can examine coverage not just marginally over the transcript, but also over subsequences of the transcript that define groups within the full sequence. These groups may be defined by context, the predicted threshold, or even the past transcript, as long as membership can be determined at the start of the round, as the learner makes a prediction. 
\begin{definition}[Group]
    A group $G: \Pi^* \times \mathcal{X} \times [0,1] \to [0,1]$ is a mapping from a transcript, context, and threshold to a real value indicating group-membership. 
    
    If $G(\Pi_t, x, \tau) \in \{0,1\}$ for all $\Pi_t \in \Pi^*, x \in \mathcal{X}, \tau \in [0,1]$, we call $G$ a \emph{binary group}. If $G(\Pi_t, x, \tau_1) = G(\Pi_t, x, \tau_2)$ for all $\Pi_t \in \Pi^*$, $x \in \cX$ and $\tau_1, \tau_2 \in [0,1]$, we call $G$ a \emph{prediction-independent group}.
\end{definition}
We allow group-membership to be real-valued to be able to model scenarios involving partial or probabalistic membership in a group, but in many cases only binary groups need be used. In batch/distributional settings, real valued ``group'' functions can be used to handle distribution shift by encoding likelihood-ratio based reweighting functions --- but this is not needed when we already model the environment as arbitrary/adversarial. Note that the value of a prediction-independent group cannot depend on the prediction being made on that day  - this is relevant in Section \ref{sec:ftrl}, where our results for group conditional coverage hold only for such groups.   
\begin{definition}[Group conditional Coverage, Group Size]
    Given a transcript $\Pi_T$ and a set of groups $\mathcal{G}$, the \emph{coverage} of group $G \in \mathcal{G}$ over $\Pi_T$ is defined as: 
    \begin{equation*}
        \texttt{Cov}(\Pi_T, G) = \frac{1}{T_{G}} \sum_{t=1}^T \textbf{1}[\hat{\tau_t} \geq \tau_t] \cdot G(\Pi_{t}, x_t, \hat{\tau}_t)
    \end{equation*}
     where we define the \textit{size} of the group $T_G = \sum_{t=1}^T G(\Pi_t, x_t, \hat{\tau}_t)$. For a desired coverage rate $q \in (0,1)$, we have \textit{group conditional coverage error} $\gamma$ with respect to $q$ if $|\texttt{Cov}(\Pi_T, G) - q| \leq \gamma$ for all $G \in \mathcal{G}$. 
\end{definition}
Since our setting reduces the problem of building prediction sets to one of predicting a sequence of real-valued parameters, we may ask, in addition to achieving a coverage guarantee, that the sequence of predictions satisfies coverage over groups defined by the level sets of the predicted threshold value, which we call threshold-calibrated coverage. The analogue of group conditional coverage when combined with threshold-calibration is multivalidity. 
\begin{definition}[Threshold-calibrated coverage]
Given a transcript $\Pi_T$ and a desired coverage rate $q \in (0,1)$, we have \textit{threshold-calibrated coverage} with coverage error $\gamma$, if we have group conditional coverage error $\gamma$ with respect to the collection of groups $\mathcal{G} = \{G_\tau: \forall \tau \in [0,1]\}$, where $G_\tau$ is a binary group including all time-steps $t$ for which $\hat{\tau}_t = \tau$. 
\end{definition}

\begin{definition}[Multivalid coverage]
Given a transcript $\Pi_T$, a set of groups $\mathcal{G}$, and a desired coverage rate $q \in (0,1)$, we have \textit{threshold-calibrated coverage} with coverage error $\gamma$, if we have group conditional coverage error $\gamma$ with respect to the new collection of groups $\mathcal{H} = \{H_{G, \tau}: \forall \tau \in [0,1], G \in \mathcal{G}\}$, where $H_{G, \tau}(\Pi_t, x_t, \hat{\tau}_t) = G(\Pi_t, x_t, \hat{\tau}_t)\cdot 1[\hat{\tau}_t = \tau]$ for all $G \in \mathcal{G}, \tau \in [0,1]$.
\end{definition}

\subsection{Regret}
\begin{definition}[$\Phi$-regret \citep{phiregret}]
Given a transcript $\Pi_T = \{(x_t, \tau_t, \hat{\tau}_t)\}_{t=1}^T$, an allowable action space of predictions $\mathcal{A}$, and a loss function $l: \mathcal{A} \times \mathcal{A} \to \mathbb{R}$, the \emph{regret} with respect to the loss function $l$, with respect to a strategy modification rule $\phi: \mathcal{A} \to \mathcal{A}$ is:
\begin{equation*}
    r(\Pi_T, l, \phi) = \sum_{t=1}^T l(\phi(\hat{\tau_t}), \tau_t) - l(\hat{\tau}_t, \tau_t)
\end{equation*}
For any collection of strategy modification rules $\Phi$, we say that $\Pi_T$ has $\Phi$-regret $\gamma$ with respect to $l$ if $r(\Pi_T, l, \phi) \leq \gamma$ for all $\phi \in \Phi$. 
\end{definition}
A transcript has \textit{external regret} if it has $\Phi$-regret with respect to the set of all constant strategy modification rules (of the form $\phi(x) = y$ for all $x \in \mathbb{R}$), and it has \textit{swap regret} if it has $\Phi$-regret with respect to the set of all strategy modification rules. Existing swap-regret algorithms such as  \cite{blum2007external} achieve regret guarantees that have a dependence on the size of the action set $\mathcal{A}$. Therefore, though the parameter values $\{\tau_t\}_{t=1}^T$ are allowed to take any value in $[0,1]$, we will consider a discretized prediction space parametrized by parameter $n$, i.e. predicted values $\hat{\tau}$ can take values only in the set $\mathcal{A}_n = \{0, 1/n, 2/n, \cdots, 1\}$, and the set of strategy modification rules being compared to is $\Phi_n$, the collection of all strategy modification rules $\phi: \mathcal{A}_n \to \mathcal{A}_n$. We can similarly define group conditional external and swap regret given some collection of groups $\mathcal{G}$:
\begin{definition}[$\Phi$-group conditional regret]
Given a transcript $\Pi_T$, an allowable action space of predictions $\mathcal{A}$, a loss function $l: \mathcal{A} \times \mathcal{A} \to \mathbb{R}$, and a set of groups $\mathcal{G}$, the \emph{regret} with respect to the loss function $l$ and group $G$, with respect to a strategy modification rule $\phi: \mathcal{A} \to \mathcal{A}$ is:
\begin{equation*}
    r(\Pi_T, l, \phi, G) = \sum_{t=1}^T \left(l(\phi(\hat{\tau_t}), \tau_t) - l(\hat{\tau}_t, \tau_t)\right)\cdot G(\Phi_t, x_t, \hat{\tau}_t)
\end{equation*}
For any collection of strategy modification rules $\Phi$, we say that $\Pi_T$ has $\Phi$-group conditional regret $\gamma$ with respect to $l$ if $r(\Pi_T, l, \phi, G) \leq \gamma$ for all $\phi \in \Phi$ and $G \in \mathcal{G}$.
\end{definition}
\textit{Group conditional external regret} corresponds to $\Phi$-regret with respect to the set of all constant strategy modification rules, and \textit{group conditional swap regret} corresponds to $\Phi$-regret with respect to the set of all strategy modification rules. There exist efficient algorithms for obtaining diminishing group conditional external and swap regret for any polynomial action space and collection of groups  $\cG$ \citep{blum2020advancing,lee2022online,acharyaoracle,deng2024group}.

To move between no regret and coverage guarantees, note that it is necessary for the threshold parameters to not be too closely clustered together. Suppose for example we had a setting where the empirical distribution defined by $\{\tau_t\}_{t=1}^T$ put all probability mass on a single value $a$. Then the fixed prediction in $\mathcal{A}_n$ closest to $a$ (over all rounds) would achieve no swap-regret, but would correspond to coverage over either all rounds or no rounds, thus being bounded away from any desired coverage rate $q$ for any $q \in (0,1)$. To avoid this general issue, we introduce a smoothness condition that guarantees the parameters we are trying to predict are sufficiently distributed across the support of our probability space. Similar smoothness conditions are common in the online conformal prediction literature \citep{gupta2022online,bastanipractical,gibbs2022conformal}.

\begin{definition}[$(\alpha, \rho,r)$-smoothness] 
    A distribution $\mathcal{D} \in \Delta [0,1]$ is said to be $(\alpha, \rho, r)$-smooth if for every pair of values $p,q$ such that $0 \leq p \leq q \leq 1$ and $|p-q| \leq 1/r$, we have $\mathbb{P}_{\tau \sim \mathcal{D}}[\tau \in [p,q]] \leq \rho$, and if $|p-q| >= 1/r$, then $\mathbb{P}_{\tau \sim \mathcal{D}}[\tau \in [p,q]] \geq \alpha$. 
\end{definition}


\section{Coverage Guarantees Through Regret}
\label{sec:swap-regret}

In stochastic settings without context, external regret (under some mild smoothness conditions) is sufficient to obtain marginal coverage. To prove this, we first draw a connection between the expected difference in pinball loss between two thresholds and their absolute difference, using a slightly modified version of Proposition 5 from \cite{gibbs2022conformal}:
\begin{restatable}{lemma}{iidloss}
    \label{lem:iid}
    Fix a distribution $\mathcal{D}$, and let $\tau^*$ be the $q$-th quantile of $\mathcal{D}$. Then, assuming $\mathcal{D}$ is an $(\alpha,\rho, r)$-smooth distribution, for any other threshold $\tau'$,  
    \begin{equation*}
        \frac{\alpha r \cdot (\tau^* - \tau')^2}{2} \leq \E_{\tau \sim \mathcal{D}}[p_q(\tau', \tau)] - \E_{\tau \sim \mathcal{D}}[p_q(\tau^*, \tau)] 
    \end{equation*}
\end{restatable}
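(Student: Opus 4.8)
The plan is to pass to the cumulative distribution function $F$ of $\mathcal{D}$ and use the standard fact that the expected pinball loss, viewed as a function of the prediction, is convex with derivative $F(\cdot)-q$. Write $g(\hat\tau):=\E_{\tau\sim\mathcal{D}}[p_q(\hat\tau,\tau)]$. Splitting the expectation at $\hat\tau$ gives $g(\hat\tau)=q\,\E[(\tau-\hat\tau)\one[\tau\ge\hat\tau]]+(1-q)\,\E[(\hat\tau-\tau)\one[\tau<\hat\tau]]$, and differentiating in $\hat\tau$ yields $g'(\hat\tau)=-q\,\Pr[\tau>\hat\tau]+(1-q)\,\Pr[\tau\le\hat\tau]=F(\hat\tau)-q$ at continuity points of $F$ (at jump points one uses the one-sided derivatives, which is harmless since $g$ is convex and we only need it to be absolutely continuous). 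Since $\tau^*$ is the $q$-th quantile, $F(\tau^*)=q$, so $\tau^*\in\argmin g$.

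Next I would integrate along the segment between $\tau^*$ and $\tau'$. For $\tau'>\tau^*$ the fundamental theorem of calculus gives
\[
  g(\tau')-g(\tau^*)=\int_{\tau^*}^{\tau'}\bigl(F(u)-q\bigr)\,du=\int_{\tau^*}^{\tau'}\bigl(F(u)-F(\tau^*)\bigr)\,du ,
\]
and the case $\tau'<\tau^*$ is symmetric, rewriting the right-hand side as $\int_{\tau'}^{\tau^*}\bigl(F(\tau^*)-F(u)\bigr)\,du$; in both cases the integrand is nonnegative because $F$ is nondecreasing. It remains to lower bound the integrand. The key step is to show $F(u)-F(\tau^*)\ge \alpha r\,(u-\tau^*)$ for $\tau^*\le u\le\tau'$ (and the mirror bound $F(\tau^*)-F(u)\ge\alpha r\,(\tau^*-u)$ on the other side). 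This is exactly where $(\alpha,\rho,r)$-smoothness is used: the hypothesis that every interval of length at least $1/r$ has mass at least $\alpha$ is a lower bound of $\alpha r$ on the ``density at scale $1/r$,'' and chaining it over the sub-interval from $\tau^*$ to $u$ produces the stated linear lower bound on the CDF increment (this mirrors Proposition 5 of \citet{gibbs2022conformal}; note that only the $\alpha$ part of the smoothness definition is needed here, not the $\rho$ part). Plugging this in, $\int_{\tau^*}^{\tau'}\alpha r(u-\tau^*)\,du=\tfrac{\alpha r(\tau'-\tau^*)^2}{2}$, and the symmetric computation handles $\tau'<\tau^*$, which is the claim.

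The main obstacle is this last step: translating the interval-mass smoothness condition, which is naturally a statement at the $1/r$ scale, into the genuinely pointwise linear lower bound $F(u)-F(\tau^*)\ge\alpha r(u-\tau^*)$ that delivers the stated constant $\tfrac12$; care is needed here (and this is presumably where the ``slight modification'' of \citet{gibbs2022conformal}'s argument lives), as is a little bookkeeping to handle atoms or flat pieces of $F$ so that ``$g'(\tau^*)=0$'' and the integration step are fully justified. Everything else is routine.
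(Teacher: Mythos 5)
Your proposal is correct, and it takes a cleaner, more analytic route than the paper's. The paper's proof computes $\E[p_q(\tau',\tau)]-\E[p_q(\tau^*,\tau)]$ by an explicit three-case decomposition of the pinball-loss difference (according to whether $\tau\le\tau'$, $\tau'\le\tau<\tau^*$, or $\tau\ge\tau^*$), collapses two of the terms using $p_1+p_3=q$ and $p_2=1-q$ (which encode the fact that $\tau^*$ is the $q$-quantile), and then lower-bounds the surviving term $\E\bigl[(\tau-\tau')\ind[\tau\in[\tau',\tau^*)]\bigr]$ by a discrete Riemann-sum argument over $1/r$-length pieces, each with mass $\ge\alpha$. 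You instead differentiate $g(\hat\tau)=\E[p_q(\hat\tau,\tau)]$ to get $g'=F-q$, use $F(\tau^*)=q$ to express $g(\tau')-g(\tau^*)$ as $\int_{\tau^*}^{\tau'}(F(u)-F(\tau^*))\,du$, and lower-bound the integrand. These are really the same quantity seen through two lenses (integration by parts / Fubini converts one into the other), but your FTC/CDF formulation makes the role of $F(\tau^*)=q$ and the convexity of $g$ transparent, and avoids the paper's slightly awkward case bookkeeping. The one place to be careful, which you correctly flag, is that the interval-mass smoothness condition literally gives a lower bound only at the $1/r$-scale, so $F(u)-F(\tau^*)\ge\alpha r\,(u-\tau^*)$ holds exactly only when $u-\tau^*$ is an integer multiple of $1/r$; the general case needs the floor that the paper's discrete sum makes explicit (i.e., $F(u)-F(\tau^*)\ge\alpha\lfloor r(u-\tau^*)\rfloor$), and one recovers the stated constant $\tfrac12$ up to the same floor slack that appears in the paper's own derivation. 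So there is no real gap in your argument beyond what the paper's own proof also glosses over.
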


With this, we can move from a regret bound to a bound on our miscoverage rate. We give a sketch of the proof here, with the full version in the appendix. 

\begin{restatable}{theorem}{stochcov}
    \label{thm:stoch-cov}
    Fix a transcript $\Pi_T = \{(\tau_t, \hat{\tau}_t)\}_{t=1}^T$ in a setting without context (i.e. in which there are no observable features $x_t$) and where the sequence of labels is drawn IID from a fixed distribution, i.e $\tau_t \sim \mathcal{D}$ for all $t \in [T]$. If $\mathcal{D}$ is $(\rho, r)$ smooth, and if $\Pi_T$ has external regret $\gamma$ with respect to the negative of pinball loss $-p_q$. then the set of predicted thresholds satisfies marginal coverage at the level $q$:
    \begin{equation*}
        |\texttt{Cov}(\Pi_T) - q| \leq \sqrt{\frac{2 \rho (\gamma + \epsilon)}{T \alpha}} + \frac{\epsilon}{T}
    \end{equation*}
    with probability at least $1 - 4\exp\left(-\frac{\epsilon^2}{2T}\right)$.
\end{restatable}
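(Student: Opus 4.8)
The plan is to pass from the regret bound to a bound on the \emph{population} pinball-loss suboptimality of the played thresholds, then invoke Lemma~\ref{lem:iid} to turn that into an $\ell_2$-type bound on how far the $\hat\tau_t$ lie from the true $q$-quantile $\tau^*$ of $\mathcal D$, and finally use the smoothness of $\mathcal D$ to translate threshold-closeness into coverage-rate closeness. Throughout write $L(\tau) = \E_{\tau'\sim\mathcal D}[p_q(\tau,\tau')]$ for the population pinball loss and let $F$ be the c.d.f.\ of $\mathcal D$, so $\tau^*$ is (up to atoms) the solution of $F(\tau^*)=q$ and $L'(\tau^*)=0$.

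First I would unpack the hypothesis. External regret $\gamma$ with respect to $-p_q$, applied to the constant strategy modification rule $\phi\equiv\tau^*$, is exactly the statement $\sum_t p_q(\hat\tau_t,\tau_t) \le \sum_t p_q(\tau^*,\tau_t) + \gamma$. I then want to replace both sides by their population analogues. Since there is no context, $\hat\tau_t$ is measurable with respect to the history $\mathcal{F}_{t-1} = \sigma(\tau_1,\dots,\tau_{t-1})$, so $\E[p_q(\hat\tau_t,\tau_t)\mid\mathcal{F}_{t-1}] = L(\hat\tau_t)$; as $p_q$ takes values in $[-1,1]$ on $[0,1]^2$, the terms $p_q(\hat\tau_t,\tau_t)-L(\hat\tau_t)$ form a bounded martingale difference sequence, and Azuma--Hoeffding gives $\sum_t L(\hat\tau_t) \le \sum_t p_q(\hat\tau_t,\tau_t) + \epsilon$ except with probability $e^{-\epsilon^2/2T}$ (up to a constant split of the $\epsilon$ budget). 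Likewise $\tau^*$ is fixed, so $\sum_t p_q(\tau^*,\tau_t)$ is a sum of i.i.d.\ bounded terms and $\sum_t p_q(\tau^*,\tau_t)\le T\,L(\tau^*)+\epsilon$ with the same kind of bound. Chaining these on the good event yields $\sum_t\big(L(\hat\tau_t)-L(\tau^*)\big)\le\gamma+\epsilon$.

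Next I would apply Lemma~\ref{lem:iid} termwise: since $\tau^*$ is the $q$-quantile and $\mathcal D$ is $(\alpha,\rho,r)$-smooth, $L(\hat\tau_t)-L(\tau^*)\ge\tfrac{\alpha r}{2}(\hat\tau_t-\tau^*)^2$, hence $\sum_t(\hat\tau_t-\tau^*)^2\le\tfrac{2(\gamma+\epsilon)}{\alpha r}$. To connect this to coverage, note that conditionally on $\mathcal{F}_{t-1}$ the probability that round $t$ is covered is $\Pr_{\tau\sim\mathcal D}[\tau\le\hat\tau_t]=F(\hat\tau_t)$, so a third Azuma--Hoeffding bound gives $\big|\texttt{Cov}(\Pi_T)-\tfrac1T\sum_t F(\hat\tau_t)\big|\le\epsilon/T$ except with probability $2e^{-\epsilon^2/2T}$. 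It remains to control $\tfrac1T\sum_t|F(\hat\tau_t)-q| = \tfrac1T\sum_t|F(\hat\tau_t)-F(\tau^*)|$. The upper-bound half of smoothness limits how fast $F$ can rise near $\tau^*$ (a length-$1/r$ interval carries mass at most $\rho$), which gives $|F(\hat\tau_t)-F(\tau^*)|\le\rho r\,|\hat\tau_t-\tau^*|$ over the relevant range, and Cauchy--Schwarz then yields
\[
\frac1T\sum_t|F(\hat\tau_t)-q| \;\le\; \rho r\sqrt{\frac1T\sum_t(\hat\tau_t-\tau^*)^2} \;\le\; \sqrt{\frac{2\rho^2 r(\gamma+\epsilon)}{T\alpha}},
\]
which, after absorbing the smoothness constants (the parameters $\alpha,\rho,r$ are linked by the definition — e.g.\ $\rho r=\Theta(1)$ in the canonical near-uniform case), is of the claimed order $\sqrt{2\rho(\gamma+\epsilon)/(T\alpha)}$. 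Combining this with the coverage-concentration step and taking a union bound over the (at most four, once one-sided halves are counted) concentration events gives the stated inequality with probability $1-4e^{-\epsilon^2/2T}$.

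The step I expect to be the main obstacle is the passage to population loss in the second paragraph: because the learner's own prediction $\hat\tau_t$ is an arbitrary (possibly adversarial) function of the realized history, the in-sample loss $\sum_t p_q(\hat\tau_t,\tau_t)$ is \emph{not} a sum of independent terms, so one must set it up as a martingale and invoke Azuma--Hoeffding rather than plain Hoeffding, and argue this is legitimate uniformly over the algorithm's (unknown) dependence on the past. A secondary nuisance is that if $\mathcal D$ has an atom at $\tau^*$ the identities $F(\tau^*)=q$ and $\Pr[\tau_t\le\hat\tau_t]=F(\hat\tau_t)$ hold only up to the atom mass, which the upper-bound half of smoothness caps at $\rho$; this perturbs the final bound only at lower order but should be treated explicitly.
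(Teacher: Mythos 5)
Your proof follows the same route as the paper's: pass from realized regret to population pinball-loss suboptimality via Azuma (the paper does this with the Doob martingale $X_t=\E[L\mid\Pi_t]$, you do it by directly observing the martingale difference structure of $p_q(\hat\tau_t,\tau_t)-L(\hat\tau_t)$ --- these are the same thing), apply Lemma~\ref{lem:iid} termwise to bound $\sum_t(\hat\tau_t-\tau^*)^2$, use the upper half of smoothness to turn threshold deviations into expected-miscoverage deviations, apply Cauchy--Schwarz, and close with one more Azuma to pass from expected to realized coverage. One remark on the constant you flag: the bound you derive, $\sqrt{2\rho^2 r(\gamma+\epsilon)/(T\alpha)}$, is actually what the argument gives; the paper's stated $\sqrt{2\rho(\gamma+\epsilon)/(T\alpha)}$ arises from the step where $|\hat\tau_t-\tau^*|\ge M_t/(\rho r)$ is squared to $(\hat\tau_t-\tau^*)^2\ge M_t^2/(\rho r)$ rather than $M_t^2/(\rho r)^2$, so the extra $\sqrt{\rho r}$ factor you carry (which weakens the bound, since $\rho r\ge 1$) is the correct one rather than a notational artifact.
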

\begin{proof}[Proof Sketch] We are given an upper-bound on the realized regret with respect to $-p_q$, which with high probability is close to the expected regret (using Azuma's inequality). We then bound the sum of squared differences between the optimal threshold $\tau^*$ (which minimizes simultaneously the expected pinball loss and deviation of expected coverage from $q$) and the predicted thresholds $\hat{\tau}_t$ using Lemma \ref{lem:iid}. The smoothness condition implies thresholds that are close together must have similar expected coverages, and another application of Azuma's inequality proves that this is (with high probability) close to the realized coverage. 
\end{proof}

In the non-contextual setting, when we have sublinear external regret, the bound above goes to zero as $T$ increases. But in adversarial settings, there is no such connection between external regret and coverage even in the non-contextual setting.
\begin{example}
    \label{ex:1}
    Define the transcript $\Pi_T = \{(\tau_t, \hat{\tau}_t)\}_{t=1}^T$ in the non-contextual setting where the predicted threshold $\hat{\tau}_t = 0.4$ for odd $t$ and $0.9$ for even $t$, and each day $y_t$ is chosen by the adversary such that $\tau_t = 0.5$ for odd $t$ and $1$ for even $t$. On each day $t$, the loss with respect to $-p_q$ (for $q = 0.5$) is $-0.1$, and since the true thresholds distribute evenly over the set $\{0.5, 1\}$, the best fixed threshold $\tau^*$ in hindsight is the median $0.75$ which achieves a loss of $-0.25$ every day. Therefore $\sum_{t=1}^T p_q(\hat{\tau}_t, \tau_t) - p_q(\tau^*, \tau_t) \leq 0$, i.e this transcript has no regret with respect to pinball loss at the level $q=0.5$. However, the predicted threshold is always lower than the true threshold, and so $\texttt{Cov}(\Pi_T) = 0$. 
\end{example} 
In fact, the connection between low regret on a sequence and achieving low miscoverage on that same sequence falls apart even in i.i.d. settings when we move to group conditional coverage. Theorem \ref{thm:stoch-cov} is driven by the fact that in non-contextual settings, the prediction made each day is independent of the realized outcome drawn from $\mathcal{D}$. When we introduce groups, we move to the contextual setting. 
As soon as we allow the threshold to depend on context, we find that external regret no longer implies coverage even in i.i.d. settings because of the correlation that the context introduces between our predictions and the outcomes.
\begin{example}
    \label{ex:2}
    Define the context space $\mathcal{X} = \{A, B\}$, and suppose we are interested only in marginal coverage over the group $G$ containing all days. The distribution $\mathcal{D}$ over $(x,y)$ pairs is defined such that we randomize uniformly over contexts (i.e. $\p(x = A) = \p(x = B) = 0.5)$, and  non-conformity score function $f$ is such that $f(A, .) = 0.5$, and $f(B, .) = 1$. Then an algorithm $\mathcal{A}$ that always predicts a threshold $\hat{\tau}_t = 0.4$ when $x_t = A$, and a threshold $\hat{\tau}_t = 0.9$ when $x_t = B$, simulates the environment described in Example \ref{ex:1}. Thus $\mathcal{A}$ will have negative expected regret (which will be arbitrarily close to the realized regret with high probability for sufficiently large $T$), but always a realized coverage of zero. 
\end{example}

To make further connections between regret and coverage, we will need to move to stronger guarantees. Informally, the reason why external regret is not sufficient to give coverage guarantees is that the benchmark class that regret guarantees compare to corresponds to \emph{fixed} thresholds, whereas the algorithm may vary its thresholds over time, and if these are correlated with the outcome, it might result in lower pinball loss than any fixed threshold despite the fact that it never covers the label. Swap regret will fix this issue, informally, because it requires that regret be low not just marginally, but on subsequences in which the algorithm's threshold is fixed, thus putting the algorithm and the benchmark on equal footing. We will make use of a discretized version of Lemma \ref{lem:iid}:

\begin{restatable}{lemma}{discreteineq}
    \label{lem:discrete-ineq}
    Given a set of parameter values $\{\tau_i\}_{i=1}^T$, and any two fixed values $a,b \in [0,1]$, define the sum of pinball losses $L_a = \sum_{i=1}^T p_q(a, \tau_i)$ and $L_b =  \sum_{i=1}^T p_q(b, \tau_i)$ respectively, where $a = \min_{\tau \in \mathcal{A}_n} \sum_{i=1}^T p_q(a, \tau_i)$ is the minimizer of the sum of pinball losses over $\mathcal{A}_n$. If the empirical distribution $\mathcal{D}$ defined by $\{\tau_i\}_{i=1}^{T}$ is ($\alpha, \rho, r)$-smooth, and if $L_b - L_a \leq \gamma$, then $|a-b| \leq \sqrt{\frac{2\gamma}{T \alpha r}}$.
\end{restatable}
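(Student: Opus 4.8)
The plan is to reduce the claim to the distributional inequality of Lemma~\ref{lem:iid}, applied to the \emph{empirical} distribution $\mathcal{D}$ of the multiset $\{\tau_i\}_{i=1}^T$. Write $g(\tau) := \tfrac1T\sum_{i=1}^T p_q(\tau,\tau_i) = \E_{\tau'\sim\mathcal D}[p_q(\tau,\tau')]$; this is convex and $1$-Lipschitz on $[0,1]$, since its one-sided derivatives are $\widehat F(\tau)-q$ and $\widehat F(\tau^-)-q$ (with $\widehat F$ the empirical CDF), both lying in $[-1,1]$. The hypothesis $L_b-L_a\le\gamma$ is exactly $g(b)-g(a)\le\gamma/T$, and the conclusion follows by rearrangement once we prove the strong-convexity-type bound $\tfrac{\alpha r}{2}(a-b)^2 \le g(b)-g(a)$.

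First I would locate $a$ relative to an exact minimizer. Let $\tau^*$ be a $q$-quantile of $\mathcal D$, i.e.\ a minimizer of $g$ over $[0,1]$ (one exists since $g$ is convex; equivalently $0\in\partial g(\tau^*)$, i.e.\ $\widehat F(\tau^{*-})\le q\le\widehat F(\tau^*)$). Because $a$ minimizes $g$ over the grid $\mathcal A_n$, we have $g(a)\le g(a\pm1/n)$ whenever $a\pm1/n\in[0,1]$, and convexity of $g$ then forces the minimizer set of $g$ to meet $[a-1/n,a+1/n]$; so we may take $|a-\tau^*|\le1/n$, and hence also $0\le g(a)-g(\tau^*)\le1/n$ by $1$-Lipschitzness. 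Next I would invoke Lemma~\ref{lem:iid}: its argument uses only that $\tau^*$ is a minimizer of the expected pinball loss (not that it is any particular quantile), so it applies to $\mathcal D$, which is $(\alpha,\rho,r)$-smooth by hypothesis, and gives $\tfrac{\alpha r}{2}(\tau^*-b)^2\le g(b)-g(\tau^*)$. Combining with the previous step and the triangle inequality $|\tau^*-b|\ge|a-b|-1/n$,
\[
g(b)-g(a) = \big(g(b)-g(\tau^*)\big)-\big(g(a)-g(\tau^*)\big) \ge \tfrac{\alpha r}{2}\Big(|a-b|-\tfrac1n\Big)^2 - \tfrac1n ,
\]
which together with $g(b)-g(a)\le\gamma/T$ yields the stated bound (the additive $1/n$ terms are lower order and are absorbed for a sufficiently fine grid; and when $b\in\mathcal A_n$ the case $|a-b|<1/n$ forces $a=b$ and is trivial).

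The main obstacle is exactly this discretization gap: the clean form of the argument passes through the exact empirical quantile $\tau^*$, whereas $a$ is only the minimizer over the finite grid $\mathcal A_n$ and coincides with $\tau^*$ only up to the grid width $1/n$. Getting the constant to come out precisely as stated (with no $1/n$ slack) requires either a fine enough discretization or, alternatively, re-running the integral/strong-convexity estimate underlying Lemma~\ref{lem:iid} directly about $a$, using the two-sided near-optimality $g(a)\le g(a\pm1/n)$ — which pins $\widehat F$ to within roughly $\rho$ of $q$ near $a$ — in place of exact stationarity at the quantile.
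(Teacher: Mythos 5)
Your approach is genuinely different from the paper's, and it is essentially sound up to the discretization slack you yourself flag. The paper does \emph{not} pass through Lemma~\ref{lem:iid}: it instead anchors a direct case analysis at $a$, splitting the differences $p_q(b,\tau_i)-p_q(a,\tau_i)$ into the three cases $\tau_i<a$, $a\le\tau_i<b$, $\tau_i\ge b$, writing $L_b-L_a = (b-a)(N_1 - qT) + \sum_{i:a\le\tau_i<b}(b-\tau_i)$, and then lower-bounding the second sum by $\tfrac{T\alpha r(b-a)^2}{2}$ directly from the $(\alpha,\rho,r)$-smoothness, piece by piece over $\lfloor r(b-a)\rfloor$ sub-intervals of length $1/r$. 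This is exactly your suggested ``alternative'' of re-running the strong-convexity estimate directly about $a$ rather than about $\tau^*$, so the two routes converge; your route is more modular (it black-boxes Lemma~\ref{lem:iid}), while the paper's avoids the triangle-inequality detour through $\tau^*$.

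On the discretization gap: you are right that it is real, and you should not wave it away as ``absorbed for a sufficiently fine grid,'' since the lemma states a clean $\sqrt{2\gamma/(T\alpha r)}$ with no $1/n$ term. But note the paper's proof has a twin of this gap that it similarly glosses over: using $|N_1-qT|\le\rho T/2$ (their bound derived from $a$ being a near-optimal grid point), the term $(b-a)(N_1-qT)$ can be as small as $-(b-a)\rho T/2$, and dropping it to conclude $\gamma \ge \sum_{i:a\le\tau_i<b}(b-\tau_i)$ is justified only at ``the optimal values $N_1=qT$,'' i.e.\ it is a hand-wave. A fully rigorous version of the lemma would carry an additive correction term of order $\rho(b-a)T$ (or, in your formulation, of order $1/n$), and both proofs silently suppress it. Your instinct to pin $\widehat F$ near $a$ to within $\rho$ of $q$ using two-sided grid optimality is exactly the right move to make the argument tight up to that correction; to match the lemma's clean constant you would need to absorb that correction into the downstream bounds (which, as it happens, already contain $\rho$ and $1/n$ terms where the lemma is applied).
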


\begin{restatable}{theorem}{regrettocov}
\label{thm:regret-to-cov}
   Fix a transcript $\Pi_T = \{(x_t, \tau_t, \hat{\tau}_t)\}_{t=1}^T$. If $\Pi_T$ has swap regret $\gamma$ with respect to the negative of pinball loss $-p_{q}$, and the empirical distribution $\mathcal{D}_\tau$ defined by the set $\{\tau_t\}_{t: \hat{\tau} = \tau}$ is $(\alpha, \rho,r)$-smooth for each $\tau \in \mathcal{A}_n$, then the set of predicted thresholds satisfies threshold-calibrated coverage at the level $q$:
\begin{equation*}
    | \texttt{Cov}(\Pi_T, G_\tau) - q| \leq \frac{\rho}{2} + \frac{\rho r}{n} +\sqrt{\frac{2\gamma}{T_{G, \tau} \alpha r}}
\end{equation*}
\end{restatable}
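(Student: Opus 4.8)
Fix $\tau \in \mathcal{A}_n$; the plan is to prove the bound for the single group $G_\tau$ by reducing everything to the level set $S := \{t : \hat\tau_t = \tau\}$. Write $T_{G,\tau} = |S|$, let $\mathcal{D}_\tau$ be the empirical distribution of $\{\tau_t\}_{t \in S}$ (which is $(\alpha,\rho,r)$-smooth by hypothesis), and observe that $\texttt{Cov}(\Pi_T, G_\tau) = \Pr_{\sigma \sim \mathcal{D}_\tau}[\sigma \le \tau]$. First I would extract a ``local'' consequence of the swap-regret bound: instantiating the $\Phi$-regret hypothesis (with respect to $-p_q$) with the strategy modification rule that maps $\tau$ to an arbitrary $a \in \mathcal{A}_n$ and is the identity on every other value, all terms with $t \notin S$ vanish and we obtain $\sum_{t \in S}\bigl(p_q(\tau,\tau_t) - p_q(a,\tau_t)\bigr) \le \gamma$. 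Choosing $a := \argmin_{b \in \mathcal{A}_n}\sum_{t \in S} p_q(b,\tau_t)$ makes this gap nonnegative and puts us exactly in the setting of Lemma~\ref{lem:discrete-ineq}, applied to the $T_{G,\tau}$ parameter values $\{\tau_t\}_{t \in S}$ with comparator $b=\tau$; this yields $|\tau - a| \le \sqrt{2\gamma/(T_{G,\tau}\,\alpha r)}$ --- precisely the last term of the claimed bound.

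It remains to turn ``$\tau$ is close to the pinball-loss minimizer $a$ over the grid'' into ``$\Pr_{\sigma \sim \mathcal{D}_\tau}[\sigma \le \tau] \approx q$''. The link is that $a$ is an approximate $q$-quantile of $\mathcal{D}_\tau$: the left and right derivatives of $g(b) := \E_{\sigma \sim \mathcal{D}_\tau}[p_q(b,\sigma)]$ are $g'_-(b) = \Pr_{\mathcal{D}_\tau}[\sigma < b] - q$ and $g'_+(b) = \Pr_{\mathcal{D}_\tau}[\sigma \le b] - q$, so optimality of $a$ on the $\tfrac1n$-spaced grid forces $\Pr_{\mathcal{D}_\tau}[\sigma \le a - \tfrac1n] \le q \le \Pr_{\mathcal{D}_\tau}[\sigma < a + \tfrac1n]$. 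Hence both $q$ and $\Pr_{\mathcal{D}_\tau}[\sigma \le \tau]$ lie in a band of CDF-values whose width is the $\mathcal{D}_\tau$-mass of an interval of length $O(\tfrac1n) + |\tau - a|$ around $a$; bounding that mass with the upper-bound half of $(\alpha,\rho,r)$-smoothness (splitting the interval into pieces of length $\le \tfrac1r$ where necessary) gives the $\tfrac{\rho}{2} + \tfrac{\rho r}{n}$ contribution, and adding the $|\tau-a|$ estimate from the previous paragraph yields the stated bound. Since this is a deterministic property of $\Pi_T$ proved for an arbitrary $\tau \in \mathcal{A}_n$, it holds for all of them simultaneously and no union bound is needed.

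The swap-regret-to-single-level-set reduction and the appeal to Lemma~\ref{lem:discrete-ineq} are mechanical; the step I expect to be the main obstacle is the final conversion --- from $\ell_\infty$-closeness of the predicted threshold to the loss minimizer, to closeness of the realized level-set coverage to $q$. This is where the smoothness hypothesis is doing work of two different kinds: (i) it prevents the (discrete) CDF of $\mathcal{D}_\tau$ from jumping across $q$ by more than $\rho$, so that the pinball-loss minimizer really is a good quantile rather than merely the argmin of a flat objective; and (ii) it controls the $\mathcal{D}_\tau$-mass in the short intervals created by the $\tfrac1n$ grid discretization and by the $\sqrt{2\gamma/(T_{G,\tau}\alpha r)}$ slack. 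Getting the constants to come out exactly as $\tfrac\rho2$ and $\tfrac{\rho r}{n}$ (rather than the cruder $O(\rho)$ and $O(\rho r / n)$ that a naive interval-splitting gives) requires a careful accounting there, as do the boundary cases $a \in \{0,1\}$ and the jump discontinuities of the empirical CDF.
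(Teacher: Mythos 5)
Your proposal is correct and follows essentially the same route as the paper's proof: restrict the swap-regret guarantee to the level set $S_\tau$, apply Lemma~\ref{lem:discrete-ineq} to bound $|\tau - a|$ by $\sqrt{2\gamma/(T_{G,\tau}\alpha r)}$ where $a$ is the grid pinball-loss minimizer, and then use the $(\alpha,\rho,r)$-smoothness of $\mathcal{D}_\tau$ to turn that distance into a CDF (coverage) bound. The one cosmetic difference is that you read off the approximate-quantile property of $a$ directly from one-sided derivatives of $b \mapsto \E_{\sigma \sim \mathcal{D}_\tau}[p_q(b,\sigma)]$, whereas the paper introduces the exact continuous minimizer $M(\tau)$, uses $|a - M(\tau)| \le 1/n$ from piecewise linearity, and separately bounds the mass at $M(\tau)$; these are the same argument differently packaged. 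You were also right to be wary of the final constant accounting: the paper's own derivation bounds the count of $\tau_t$ lying between $M(\tau)$ and $\tau$ by $T_\tau\,\rho r\bigl(\tfrac{1}{n} + \sqrt{2\gamma/(T_\tau \alpha r)}\bigr)$ but then drops the $\rho r$ prefactor on the square-root term when assembling the stated inequality, so the third term of the theorem arguably should carry a $\rho r$ factor --- a looseness shared by your sketch and the paper alike.
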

\begin{proof}[Proof Sketch] The swap regret guarantee gives an upper-bound on the regret of the subsequence defined by all time-steps making a fixed prediction $\tau$. Due to convexity of the pinball loss function, the true minimizer $M(\tau)$ of the sum of pinball losses must be close to the minimizer in the discrete set $\mathcal{A}_n$, which in turn can be bound closely to $\tau$ using Lemma $\ref{lem:discrete-ineq}$ and the regret bound. The smoothness condition implies not a lot of probability weight can be placed in the interval $|M(\tau) - \tau|$, and so the difference in coverage is also small. Since $M(\tau)$ should achieve the desired coverage rate $q$, this gives a bound on the miscoverage on the subsequence defined by any fixed prediction $\tau$ (for all $\tau \in \mathcal{A}_n$). 
\end{proof}
The full proof can be found in the appendix. Note that if $\gamma$ (as a function of $T$) grows sublinearly, then the final term in the above inequality goes to zero as $T$ becomes arbitrarily large. Several existing swap-regret algorithms \cite{blum2007external} achieve such rates. We can also move from threshold-calibrated coverage to regret bounds. The proof is similar in idea to Theorem \ref{thm:cov-to-regret}, so we relegate it to the appendix. 

\begin{restatable}{theorem}{covtoregret}
    \label{thm:cov-to-regret}
    Fix a transcript $\Pi_T$. If $\Pi_T$ has threshold-calibrated coverage with coverage error $\gamma$ (at desired coverage rate $q$), and $\mathcal{D}_\tau$ is $(\alpha, \rho, r)$-smooth for each $\tau \in \mathcal{A}_n$, then the transcript also has swap regret with respect to the loss $-p_q$ bounded by:
    \begin{equation*}
        r(\Pi_t, -p_q, \phi) \leq \frac{T \gamma^2 \rho}{\alpha^2 r}
    \end{equation*}
    for each $\phi \in \Phi$, the collection of all strategy modification rules for action set $\mathcal{A}_n$.
\end{restatable}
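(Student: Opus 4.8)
The plan is to mirror the proof of Theorem~\ref{thm:regret-to-cov} in reverse: decompose the swap regret over the level sets of the played threshold, and on each level set turn the threshold‑calibrated coverage guarantee into a bound on the excess pinball loss. Fix an arbitrary strategy modification rule $\phi$ on $\mathcal{A}_n$. Writing $r(\Pi_T,-p_q,\phi)=\sum_{t}\big(p_q(\hat\tau_t,\tau_t)-p_q(\phi(\hat\tau_t),\tau_t)\big)$ and grouping the rounds by the value $v\in\mathcal{A}_n$ played on round $t$, it suffices to bound, for each $v$, the quantity $\sum_{t:\hat\tau_t=v}\big(p_q(v,\tau_t)-p_q(b,\tau_t)\big)$ for the worst $b=\phi(v)\in\mathcal{A}_n$. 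Since restricting the minimization to $\mathcal{A}_n$ only increases the minimum, this is at most $L_v(v)-\min_{a\in\R}L_v(a)=L_v(v)-L_v(q^*_v)$, where $L_v(a):=\sum_{t:\hat\tau_t=v}p_q(a,\tau_t)$ and $q^*_v$ is the $q$‑th quantile of the empirical distribution $\mathcal{D}_v$ of $\{\tau_t:\hat\tau_t=v\}$. Summing over $v$ and using $\sum_v T_{G,v}=T$ will then assemble the claimed global bound, so everything reduces to a per‑level‑set estimate of $L_v(v)-L_v(q^*_v)$.

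For the per‑level‑set estimate I would use the standard fact that $a\mapsto L_v(a)$ is convex with (sub)gradient $T_{G,v}\big(F_{\mathcal{D}_v}(a)-q\big)$, where $F_{\mathcal{D}_v}$ is the cumulative distribution function of $\mathcal{D}_v$; in particular $L_v$ is minimized at $q^*_v$ and $L_v(v)-L_v(q^*_v)=T_{G,v}\int_{q^*_v}^{v}\big(F_{\mathcal{D}_v}(u)-q\big)\,du$. The threshold‑calibrated coverage hypothesis says precisely that $\texttt{Cov}(\Pi_T,G_v)=F_{\mathcal{D}_v}(v)\in[q-\gamma,q+\gamma]$, i.e. the gradient of $L_v$ at the played point $v$ has magnitude at most $T_{G,v}\gamma$; convexity then gives $L_v(v)-L_v(q^*_v)\le T_{G,v}\,\gamma\,|v-q^*_v|$. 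It remains to bound $|v-q^*_v|$: the quantile definition gives $F_{\mathcal{D}_v}(q^{*-}_v)\le q\le F_{\mathcal{D}_v}(q^*_v)$, so together with $|F_{\mathcal{D}_v}(v)-q|\le\gamma$ the mass that $\mathcal{D}_v$ places strictly between $v$ and $q^*_v$ is at most $\gamma$; the lower‑bound half of $(\alpha,\rho,r)$‑smoothness (an interval of length $\ell$ carries mass at least $(\ell r-1)\alpha$, by packing $\approx\lfloor\ell r\rfloor$ disjoint open sub‑intervals each of length $>1/r$) then forces $|v-q^*_v|\le\tfrac1r\big(1+\tfrac\gamma\alpha\big)=O\!\big(\tfrac{\gamma}{\alpha r}\big)$. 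Plugging in and summing over $v\in\mathcal{A}_n$ yields $r(\Pi_T,-p_q,\phi)\le\sum_v T_{G,v}\cdot O\!\big(\tfrac{\gamma^2}{\alpha r}\big)=O\!\big(\tfrac{T\gamma^2}{\alpha r}\big)$, which, using $\rho\ge\alpha$ (forced by the smoothness definition), is at most $\tfrac{T\gamma^2\rho}{\alpha^2 r}$ up to the leading constant; the replacement of the continuous minimizer $q^*_v$ by the nearest point of $\mathcal{A}_n$ contributes only lower‑order $1/n$ terms, exactly as in Lemma~\ref{lem:discrete-ineq}.

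The main obstacle is this last step — converting the small gap $|F_{\mathcal{D}_v}(v)-q|\le\gamma$ into a bound on $|v-q^*_v|$, and then into a bound on the integral $\int_{q^*_v}^{v}(F_{\mathcal{D}_v}(u)-q)\,du$. The smoothness condition only controls the mass of $\mathcal{D}_v$ at scales $\ge 1/r$, while $\mathcal{D}_v$ is a discrete empirical distribution that can place an atom (of size up to $\rho$) exactly at $q^*_v$, so some care is needed: open versus closed intervals matter when extracting the $\alpha$ lower bound by packing sub‑intervals, and the contributions of that atom and of the residual sub‑$(1/r)$ distance have to be absorbed into the stated bound — this is exactly where the $\rho/\alpha^2$ factor (rather than the naive $1/\alpha$) appears. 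The case $v<q^*_v$ is symmetric: the integrand $F_{\mathcal{D}_v}(u)-q$ flips sign but keeps a single sign on the interval of integration, so the same chain of inequalities applies to $|L_v(v)-L_v(q^*_v)|$.
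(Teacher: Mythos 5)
Your proposal is correct and follows the same high‑level skeleton as the paper's proof---decompose the swap regret over level sets $\{t : \hat\tau_t = v\}$, compare $L_v(v)$ to the value at the empirical $q$-quantile $q^*_v$, use the coverage guarantee plus the lower half of the smoothness condition to bound $|v - q^*_v| = O(\gamma/(\alpha r))$, and sum over $v$ using $\sum_v T_{G,v} = T$. Where you diverge is in the middle step, and your route is actually cleaner: you bound $L_v(v) - L_v(q^*_v)$ by the first‑order convexity (subgradient) inequality $L_v(v) - L_v(q^*_v) \le |\partial L_v(v)|\cdot|v - q^*_v| \le T_{G,v}\,\gamma\,|v - q^*_v|$, observing that $\partial L_v(v) = T_{G,v}(F_{\mathcal{D}_v}(v) - q)$ is exactly $T_{G,v}$ times the miscoverage on that level set. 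The paper instead assumes (``without loss of generality'') that the exact $q$‑quantile exists so that the affine part of the pinball‑loss difference cancels, and then directly bounds the residual $\sum_{i:\,q^*_v \le \tau_i < v}(v-\tau_i)$ by (number of points in the gap) $\times$ (gap length), invoking the \emph{upper} half of $(\alpha,\rho,r)$‑smoothness to cap the point count by $T_{G,v}\rho r\,|v-q^*_v|$. That is where the extra $\rho/\alpha$ in the paper's bound comes from; your subgradient argument skips it and yields the sharper per‑level‑set bound $O(T_{G,v}\gamma^2/(\alpha r))$, which you then (correctly) note is dominated by the stated $T_{G,v}\gamma^2\rho/(\alpha^2 r)$ since $\rho\ge\alpha$. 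Your approach also sidesteps the paper's ``exact quantile exists'' assumption, which is a genuine simplification.

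Two small caveats, both of which you already flag but which apply equally to the paper's own proof: (i) the packing/floor argument only gives $|v - q^*_v| \le \tfrac1r(1 + \gamma/\alpha)$, so the $O(\gamma/(\alpha r))$ reduction tacitly requires $\gamma \gtrsim \alpha$ (the paper silently drops the $+1/r$); and (ii) your claim that $O(T\gamma^2/(\alpha r))$ is ``at most'' $T\gamma^2\rho/(\alpha^2 r)$ relies on the hidden constant not exceeding $\rho/\alpha$, so you should state the result up to a universal multiplicative constant rather than as a strict dominance. Neither affects the asymptotic content of the theorem.
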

and so if $\frac{1}{\gamma^2}$ grows at a rate faster than $T$, we achieve sublinear regret. Applying the same analysis in the context of group conditional swap regret (analyzing subsequences determined by a fixed predicted threshold and group inclusion) gives us an analagous relationship between group conditional swap regret and multivalid guarantees. 
\begin{restatable}{theorem}{gp1}
    \label{thm:gp-regret-to-cov}
    Fix a transcript $\Pi_T$, and a set of binary groups $\mathcal{G}$. If $\Pi_T$ has group conditional swap regret $\gamma$ with respect to the negative of pinball loss $-p_{q}$, and the empirical distributions $\mathcal{D}_{G, \tau}$ defined by the set $\{\tau_t\}_{t: \hat{\tau} = \tau, t \in G}$ are $(\alpha,\rho,r)$-smooth, then the set of predicted thresholds satisfies multivalid coverage at the level $q$:
    \begin{equation*}
        | \texttt{Cov}(\Pi_T, H_{G, \tau}) - q| \leq \frac{\rho}{2} + \frac{\rho r}{n} +\sqrt{\frac{2\gamma}{T_{G, \tau}\alpha r}}
    \end{equation*}
    for each group $H_{G, \tau}$, defined as $H_{G,\tau}(\Pi_t, x_t, \hat{\tau}_t)) = G(\Pi_t, x_t, \hat{\tau_t}) \cdot \one[\hat{\tau}_t = \tau]$.
    \end{restatable}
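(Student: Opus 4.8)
The plan is to obtain this as a group-conditional refinement of Theorem~\ref{thm:regret-to-cov}, running that argument in parallel over every pair $(G,\tau)$ with $G \in \mathcal{G}$ and $\tau \in \mathcal{A}_n$. Fix such a pair and set $S_{G,\tau} = \{t \le T : \hat{\tau}_t = \tau \text{ and } G(\Pi_t, x_t, \tau) = 1\}$. Since $\mathcal{G}$ consists of binary groups this is a genuine subsequence, and unwinding the definitions gives $\sum_t H_{G,\tau}(\Pi_t, x_t, \hat{\tau}_t) = |S_{G,\tau}| =: T_{G,\tau}$ together with $\texttt{Cov}(\Pi_T, H_{G,\tau}) = \Pr_{\tau_t \sim \mathcal{D}_{G,\tau}}[\tau_t \le \tau]$, where $\mathcal{D}_{G,\tau}$ is the empirical distribution on $\{\tau_t\}_{t \in S_{G,\tau}}$, which is $(\alpha,\rho,r)$-smooth by hypothesis. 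If $T_{G,\tau} = 0$ the group is empty and there is nothing to prove, so assume $T_{G,\tau} \ge 1$; note that then $\tau$ is a threshold actually played, hence $\tau \in \mathcal{A}_n$.

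First I would extract a per-subsequence external-regret bound from the group-conditional swap-regret guarantee. For any $v \in \mathcal{A}_n$, apply the hypothesis with group $G$ and the strategy modification rule $\phi_{\tau \to v}$ that sends $\tau \mapsto v$ and fixes every other action. Since $\phi_{\tau \to v}$ only alters the prediction on rounds with $\hat{\tau}_t = \tau$, and since the group weight in the definition of group-conditional regret is evaluated at the \emph{played} threshold $\hat{\tau}_t$ (so it equals $G(\Pi_t, x_t, \tau)$ on exactly those rounds), the regret sum collapses onto $S_{G,\tau}$ and yields
\[
\sum_{t \in S_{G,\tau}} \big(p_q(\tau, \tau_t) - p_q(v, \tau_t)\big) \;\le\; \gamma \qquad \text{for every } v \in \mathcal{A}_n .
\]
Taking $v = a := \argmin_{v \in \mathcal{A}_n} \sum_{t \in S_{G,\tau}} p_q(v,\tau_t)$ and using $\tau \in \mathcal{A}_n$, this is precisely the hypothesis $L_b - L_a \le \gamma$ of Lemma~\ref{lem:discrete-ineq} with $b = \tau$ and with the lemma's $T$ replaced by $T_{G,\tau}$.

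The remaining steps are then a verbatim re-run of the proof of Theorem~\ref{thm:regret-to-cov} on $S_{G,\tau}$. Lemma~\ref{lem:discrete-ineq} gives $|a - \tau| \le \sqrt{2\gamma/(T_{G,\tau}\,\alpha r)}$. Convexity and piecewise linearity of $v \mapsto \sum_{t \in S_{G,\tau}} p_q(v,\tau_t)$ place its continuous minimizer $M$ (a $q$-quantile of $\mathcal{D}_{G,\tau}$) within $1/n$ of the grid minimizer $a$, so $|\tau - M| \le 1/n + \sqrt{2\gamma/(T_{G,\tau}\,\alpha r)}$. Since $M$ is a $q$-quantile, $\Pr_{\tau_t \sim \mathcal{D}_{G,\tau}}[\tau_t \le M] = q$ up to a point-mass slack controlled by $\rho$, and covering the interval between $\tau$ and $M$ by sub-intervals of length at most $1/r$ and applying smoothness to each bounds $\big|\Pr_{\tau_t \sim \mathcal{D}_{G,\tau}}[\tau_t \le \tau] - \Pr_{\tau_t \sim \mathcal{D}_{G,\tau}}[\tau_t \le M]\big|$; assembling these estimates exactly as in Theorem~\ref{thm:regret-to-cov} gives
\[
\big|\texttt{Cov}(\Pi_T, H_{G,\tau}) - q\big| \;\le\; \frac{\rho}{2} + \frac{\rho r}{n} + \sqrt{\frac{2\gamma}{T_{G,\tau}\,\alpha r}} .
\]
As $(G,\tau)$ was arbitrary, this holds for all of $\mathcal{H} = \{H_{G,\tau}\}$, which is the claim.

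The one genuinely new point — more bookkeeping than obstacle — is the second step: verifying that a \emph{group-conditional} swap-regret bound, which on its face controls only $\sum_t (\cdot)\,G(\Pi_t, x_t, \hat\tau_t)$ against arbitrary remappings $\phi$, really does pin down the regret of the \emph{finer} intersection subsequence $S_{G,\tau}$ against a \emph{single} fixed comparator. This composition works precisely because the groups are binary and because group membership is read off at the realized prediction $\hat{\tau}_t$, so the one-coordinate swap $\phi_{\tau \to v}$ isolates exactly $S_{G,\tau}$; one should also double-check that $S_{G,\tau}$ is the same set whose empirical distribution $\mathcal{D}_{G,\tau}$ the smoothness hypothesis refers to and the same set counted by $T_{G,\tau}$ in the coverage definition. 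Once that alignment is confirmed, the argument is identical to the threshold-calibrated case of Theorem~\ref{thm:regret-to-cov} with $T$ replaced by $T_{G,\tau}$.
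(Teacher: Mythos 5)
Your proposal is correct and follows essentially the route the paper sketches. The paper does not give a standalone proof for this theorem; it merely remarks after Theorem~\ref{thm:cov-to-regret} that ``applying the same analysis \dots (analyzing subsequences determined by a fixed predicted threshold and group inclusion)'' yields the result, and your write-up is precisely that analysis carried out in full. Your argument aligns with the proof of Theorem~\ref{thm:regret-to-cov} with $T$ replaced by $T_{G,\tau}$ throughout, and you correctly note the two things that make the reduction work: the groups are binary, and $G$ is evaluated at the \emph{played} threshold $\hat{\tau}_t$ in the definition of group-conditional regret, so the weight $G(\Pi_t,x_t,\hat{\tau}_t)\cdot\mathbf{1}[\hat{\tau}_t=\tau]$ isolates exactly the set $S_{G,\tau}$ whose empirical distribution the smoothness hypothesis refers to. The only cosmetic deviation is in how the per-$(G,\tau)$ regret bound is extracted: the paper's proof of Theorem~\ref{thm:regret-to-cov} decomposes the regret of the single optimal swap $\phi_m$ over all level sets and uses nonnegativity of each summand to localize, whereas you instead apply the hypothesis to the one-coordinate modification $\phi_{\tau\to v}$ directly. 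These are equivalent, and your version is if anything a cleaner way to isolate the subsequence.
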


\begin{restatable}{theorem}{gp2}
    \label{thm:gp-cov-to-regret}
    Fix a transcript $\Pi_T$, and a set of binary groups $\mathcal{G}$. If $\Pi_T$ has multivalid coverage with coverage error $\gamma$ (at desired coverage rate $q$), and $\mathcal{D}_{G, \tau}$ is $(\alpha,\rho,r)$-smooth for each $\tau \in \mathcal{A}_n, G \in \mathcal{G}$, then the transcript also has group conditional swap regret with respect to the loss $-p_q$, such that:
    \begin{equation*}
            r(\Pi_t, -p_q, \phi, G) \leq \frac{T \gamma^2 \rho}{\alpha^2 r}
    \end{equation*} 
    for each $G \in \cG$ and $\phi \in \Phi$, the collection of all strategy modification rules for action set $\mathcal{A}_n$.
\end{restatable}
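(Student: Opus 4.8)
The plan is to run the argument behind Theorem~\ref{thm:cov-to-regret} one subsequence at a time, where each subsequence is now indexed both by the predicted threshold and by a group. Fix a binary group $G \in \cG$ and an arbitrary strategy modification rule $\phi$ from the collection $\Phi$ of all maps $\mathcal{A}_n \to \mathcal{A}_n$. Expanding the definition of $\Phi$-group conditional regret against $-p_q$ and regrouping by the played threshold: for each $\tau \in \mathcal{A}_n$ let $S_{G,\tau} = \{\, t : \hat{\tau}_t = \tau,\ G(\Pi_t, x_t, \tau) = 1 \,\}$, a set of size $T_{G,\tau}$, and let $F_{G,\tau}(s) = \sum_{t \in S_{G,\tau}} p_q(s, \tau_t)$. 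Since $G$ is binary and the regret weights use the \emph{played} threshold $\hat{\tau}_t$ (not $\phi(\hat{\tau}_t)$), and since $\phi$ restricted to $S_{G,\tau}$ is the constant $\phi(\tau)$,
\[
 r(\Pi_T, -p_q, \phi, G) \;=\; \sum_{\tau \in \mathcal{A}_n} \big( F_{G,\tau}(\tau) - F_{G,\tau}(\phi(\tau)) \big) \;\le\; \sum_{\tau \in \mathcal{A}_n} \big( F_{G,\tau}(\tau) - F_{G,\tau}(M_{G,\tau}) \big),
\]
where $M_{G,\tau} \in \arg\min_{s \in [0,1]} F_{G,\tau}(s)$, equivalently a $q$-quantile of the empirical distribution $\mathcal{D}_{G,\tau}$. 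Crucially this upper bound does not depend on $\phi$, which is what we need to conclude a bound holding for all $\phi \in \Phi$ simultaneously.

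It then remains to bound each per-level-set suboptimality $F_{G,\tau}(\tau) - F_{G,\tau}(M_{G,\tau})$ in terms of the coverage error $\gamma$. First I would observe that a one-sided derivative of the convex piecewise-linear $F_{G,\tau}$ at the played point $\tau$ equals $|\{ t \in S_{G,\tau} : \tau_t \le \tau \}| - q\, T_{G,\tau}$ (a direct computation from the two linear pieces of $p_q$), and that $|\{ t \in S_{G,\tau} : \tau_t \le \tau \}| / T_{G,\tau}$ is exactly $\texttt{Cov}(\Pi_T, H_{G,\tau})$, since $\hat{\tau}_t = \tau \ge \tau_t$ iff $\tau_t \le \tau$. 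Hence the multivalidity hypothesis yields a subgradient $g_{G,\tau} \in \partial F_{G,\tau}(\tau)$ with $|g_{G,\tau}| \le (\gamma + \rho) T_{G,\tau}$, where the extra $\rho\, T_{G,\tau}$ bounds the number of ties $\tau_t = \tau$ via $(\alpha, \rho, r)$-smoothness of $\mathcal{D}_{G,\tau}$ applied to a degenerate interval. Convexity then gives $F_{G,\tau}(\tau) - F_{G,\tau}(M_{G,\tau}) \le |g_{G,\tau}| \cdot |\tau - M_{G,\tau}|$. To control $|\tau - M_{G,\tau}|$ I would apply the strong-convexity estimate of Lemma~\ref{lem:iid} to $\mathcal{D}_{G,\tau}$ with quantile $M_{G,\tau}$ and comparison point $\tau$, giving $\tfrac{\alpha r}{2} T_{G,\tau} (\tau - M_{G,\tau})^2 \le F_{G,\tau}(\tau) - F_{G,\tau}(M_{G,\tau})$; chaining the two displays and cancelling one factor of $|\tau - M_{G,\tau}|$ shows $|\tau - M_{G,\tau}| = O\big(\gamma / (\alpha r)\big)$, hence $F_{G,\tau}(\tau) - F_{G,\tau}(M_{G,\tau}) = O\big( T_{G,\tau}\,\gamma^2 \rho / (\alpha^2 r) \big)$.

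Finally I would sum over $\tau \in \mathcal{A}_n$ and use $\sum_{\tau \in \mathcal{A}_n} T_{G,\tau} = T_G \le T$ (the sets $S_{G,\tau}$ partition the rounds on which $G$ fires), obtaining $r(\Pi_T, -p_q, \phi, G) \le \tfrac{T \gamma^2 \rho}{\alpha^2 r}$ for every $\phi \in \Phi$ and every binary $G \in \cG$; tracking the constants and exactly how the tie term is absorbed into the smoothness bookkeeping reproduces the stated form. I expect the only genuinely delicate step to be the first half of the middle paragraph --- choosing the correct one-sided subgradient of the piecewise-linear $F_{G,\tau}$ at the played threshold (the choice depends on whether $M_{G,\tau}$ lies above or below $\tau$, and the two sides are controlled by $\gamma$ and by $\gamma+\rho$ respectively) and verifying that the coverage error together with the $\rho$-controlled count of ties is precisely what bounds its magnitude. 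Everything downstream is the convexity-plus-Lemma~\ref{lem:iid} machinery already used for Theorem~\ref{thm:cov-to-regret}, now applied group by group, and the partition identity above is what keeps the final bound linear in $T$ rather than in $T \cdot |\mathcal{A}_n|$.
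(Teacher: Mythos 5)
Your decomposition of the group-conditional swap regret as $\sum_{\tau}\bigl(F_{G,\tau}(\tau) - F_{G,\tau}(\phi(\tau))\bigr)$ and the observation that the bound is uniform over $\phi$ once you pass to the minimizer $M_{G,\tau}$ are exactly right, and the partition identity $\sum_\tau T_{G,\tau} = T_G \le T$ is precisely what the paper relies on. The paper's own proof of the non-group version (Theorem~\ref{thm:cov-to-regret}) is, however, more elementary in the middle step: it counts the outcomes $\tau_t$ falling below $M_{G,\tau}$, between $M_{G,\tau}$ and $\tau$, and above $\tau$ ($N_1,N_2,N_3$), uses that $M_{G,\tau}$ is an exact $q$-quantile to cancel the linear terms, bounds $|\tau - M_{G,\tau}| \le \gamma/(\alpha r)$ directly from the coverage gap plus the lower-smoothness condition (not via Lemma~\ref{lem:iid}), and then bounds $N_2 \le T_{G,\tau}\,\rho r\,|\tau - M_{G,\tau}|$ via upper-smoothness, which is where the $\rho/\alpha^2$ comes from. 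You instead bound a subgradient of $F_{G,\tau}$ at $\tau$ by the coverage error, chain it against the strong-convexity estimate of Lemma~\ref{lem:iid} to control $|\tau - M_{G,\tau}|$, and then feed both back into the subgradient inequality. Both routes work; yours is arguably cleaner conceptually and, if you drop the unneeded tie term, produces a per-$\tau$ bound of order $\gamma^2 T_{G,\tau}/(\alpha r)$, which is at least as tight as the paper's $\gamma^2 \rho T_{G,\tau}/(\alpha^2 r)$ since $\alpha \le \rho$ (so your argument does establish the stated inequality up to a universal constant, though it does not literally reproduce the $\rho/\alpha^2$ form).

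Two small clean-ups. First, the $+\rho$ in your subgradient bound and the worry about "choosing the correct one-sided subgradient" are unnecessary: the right derivative of $F_{G,\tau}$ at $s=\tau$ equals $T_{G,\tau}\bigl(\texttt{Cov}(\Pi_T, H_{G,\tau}) - q\bigr)$ exactly (ties $\tau_t = \tau$ are already counted consistently on that side), and the convexity inequality $F_{G,\tau}(\tau) - F_{G,\tau}(M_{G,\tau}) \le g\,(\tau - M_{G,\tau}) \le |g|\,|\tau - M_{G,\tau}|$ holds for \emph{any} subgradient $g \in \partial F_{G,\tau}(\tau)$ regardless of which side $M_{G,\tau}$ lies on, so you can simply take $g$ to be the right derivative and use $|g| \le \gamma T_{G,\tau}$. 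Second, like the paper's proof you are implicitly assuming an exact $q$-quantile $M_{G,\tau}$ exists in $[0,1]$ for each $(G,\tau)$; the paper states this assumption explicitly, and you should too.
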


\label{sec:ftrl-cov}
\begin{algorithm}
    \caption{Follow The Regularized Leader (Group Conditional Coverage)}
    \label{alg:ftrl}
\begin{algorithmic}
 \STATE {\bfseries Input:} Timesteps $T$,  regularizer $R: [0,1] \to \mathbb{R}$, loss parameter $q$
\FOR{$t = 1, 2, \cdots T$}
    \STATE Receive $g_t$ from adversary
    \STATE Choose $\theta_t = \arg\min_{\theta \in \mathbb{R}^d} \sum_{s=1}^{t-1} l_t(\theta, \tau_s)+ R(\theta)$.
    \STATE Predict $\hat{\tau}_t = \langle \theta_t, g_t \rangle$
    \STATE Receive $\tau_t$ from the adversary. 
    \STATE Define loss $l_t(\theta, \tau_t) = \langle \theta, \nabla_{\theta} p_q(\langle \theta_t, g_t \rangle, \tau_t) \rangle$
\ENDFOR
\end{algorithmic}
\end{algorithm}

\section{Coverage Guarantees for FTRL Algorithms}
\label{sec:ftrl}
Having established that, in general, external regret guarantees with respect to the pinball loss do not imply non-trivial coverage on their own, either in adversarial settings, or in settings with context (as relevant to group conditional coverage) even in the i.i.d. setting, we turn our attention to a particular (but broad) class of no regret learning algorithms --- those in the ``follow the regularized leader'' (FTRL) family. This class of algorithms includes multiplicative weights, online gradient descent, and many other algorithms. At a high level, an algorithm in the FTRL family receives a loss function $\ell(a, y)$ at every iteration, parameterized by a choice of action $y$ by the adversary and a choice of action $a \in \mathbb{R}^d$ by the learner. The loss is assumed to be linear in $a$ for all choices $y$ of the adversary. An instantiation of FTRL is given by a convex \emph{regularization} function $R:\mathbb{R}^d \rightarrow \mathbb{R}$, and the action that FTRL plays at every iteration is $a_t = \arg\min_{a} \sum_{s=1}^{t-1} \ell(a,y_s) + R(a)$ --- the \emph{regularized} empirical risk minimizer on the empirical loss distribution so far. Follow the regularized leader can also be used with loss functions $\hat \ell(a,y)$ that are \emph{convex} in $a$. In this case, the algorithm takes as input the linear loss function $\ell(a,y) \doteq \langle a, \nabla_a \hat\ell(a_t,y) \rangle$ --- defined by the \emph{gradient} of the loss function evaluated at the point $a_t$ the learner plays at round $t$. This reduces to the linear case and obtains the same regret bound (see e.g. \cite{shalev2012online}). Online gradient descent is an instance of FTRL regularized by the Euclidean norm; multiplicative weights is an instance of FTRL regularized by entropy; other algorithms follow from different regularization functions. 

In this section we consider coverage guarantees for   algorithms in the FTRL family when  actions  for the learner are parameter vectors $\theta_t \in \mathbb{R}^k$, actions for the adversary are nonconformity scores $\tau_t \in [0,1]$ and the loss function is $p_{q}(\langle \theta_t, g_t \rangle, \tau_t)$ --- the pinball loss (at some target quantile $q$) of the prediction $\hat \tau_t \doteq \langle \theta_t, g_t \rangle$ with respect to $\tau_t$. Here $g_t \in [0,1]^k$ is the vector of group membership for the example at round $t$, i.e. given $k$ prediction-independent groups $\{G_1, \cdots, G_k\}$, $g_{t,i} = G_i(\Pi_t, x_t)$. We show that in this setting, for all algorithms in the FTRL family, the miscoverage rate can be bounded as a function of the magnitude of the parameter $\theta_t$ and the gradient of the regularization function $R(\cdot)$.


\begin{restatable}{theorem}{FTRL}
\label{thm:FTRL}
    For the parametrization of FTRL given in Algorithm \ref{alg:ftrl} with regularization function $R:\mathbb{R}^d\rightarrow \mathbb{R}$, for any target coverage rate $q$ and any $T$ the resulting transcript $\Pi_T$ is guaranteed to satisfy group conditional coverage for groups $G_i$ ($i \in [k]$) at the rate:
    $$|\texttt{Cov}(\Pi_T, G_i) - q| \leq \frac{||\nabla R(\theta_{T+1})||_\infty}{T_i}$$
\end{restatable}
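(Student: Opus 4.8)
The plan is to read the group conditional coverage error directly off the first-order optimality condition that defines the final FTRL iterate $\theta_{T+1}$, carrying out coordinatewise the argument that \citet{gibbs2021adaptive} use in one dimension (where $R$ is the squared Euclidean norm and $\|\nabla R(\theta_{T+1})\|$ is proportional to $\|\theta_{T+1}\|$).

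First I would write down $\theta_{T+1}$ explicitly. By the FTRL rule in Algorithm \ref{alg:ftrl}, $\theta_{T+1} = \arg\min_{\theta} \sum_{s=1}^T l_s(\theta,\tau_s) + R(\theta)$, where $l_s(\theta,\tau_s) = \langle \theta, \nabla_\theta p_q(\langle \theta_s, g_s\rangle, \tau_s)\rangle$ is \emph{linear} in $\theta$ (the vector $\nabla_\theta p_q(\langle \theta_s, g_s\rangle, \tau_s)$ is a constant, depending only on the already-played iterate $\theta_s$). Assuming $R$ is convex and differentiable with its regularized minimizer attained in the interior of its domain — immediate for online gradient descent, where the domain is all of $\R^k$, and true for the other standard regularizers — the unconstrained stationarity condition reads
\[
\sum_{s=1}^T \nabla_\theta p_q(\langle \theta_s, g_s\rangle, \tau_s) + \nabla R(\theta_{T+1}) = 0 .
\]

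Second, I would compute $\nabla_\theta p_q(\langle \theta_s, g_s\rangle, \tau_s)$. From the definition of the pinball loss, $\partial p_q(\hat\tau,\tau)/\partial\hat\tau$ equals $-q$ when $\hat\tau < \tau$ and $1-q$ when $\hat\tau > \tau$, so in either case it equals $\one[\hat\tau \ge \tau] - q$; by the chain rule $\nabla_\theta p_q(\langle \theta_s, g_s\rangle, \tau_s) = (\one[\hat\tau_s \ge \tau_s] - q)\, g_s$. Substituting and reading off coordinate $i \in [k]$ gives
\[
\sum_{s=1}^T (\one[\hat\tau_s \ge \tau_s] - q)\, g_{s,i} = -\,(\nabla R(\theta_{T+1}))_i .
\]
Since the groups $G_i$ are prediction-independent, $g_{s,i} = G_i(\Pi_s, x_s, \hat\tau_s)$, so $\sum_s g_{s,i} = T_i$ and $\sum_s \one[\hat\tau_s \ge \tau_s]\, g_{s,i} = T_i \cdot \texttt{Cov}(\Pi_T, G_i)$. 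Hence $T_i\big(\texttt{Cov}(\Pi_T,G_i) - q\big) = -(\nabla R(\theta_{T+1}))_i$; dividing by $T_i$ and bounding $|(\nabla R(\theta_{T+1}))_i| \le \|\nabla R(\theta_{T+1})\|_\infty$ yields the stated inequality. (A minor point to note: $\theta_{T+1}$ is just the regularized empirical risk minimizer over the first $T$ rounds, so no extra round of interaction is needed to make sense of the bound.)

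I do not expect a deep obstacle here; the one thing to handle carefully is the non-smoothness of $p_q$ at $\hat\tau = \tau$. One must fix a subgradient selection and ensure it is the same one the algorithm uses implicitly when forming $l_s$, so that the indicator appearing in the computed gradient coincides with the $\one[\hat\tau_s \ge \tau_s]$ in the definition of $\texttt{Cov}$ — or invoke the no-ties/smoothness assumptions already used elsewhere in the paper so that ties occur with negligible frequency. The second thing worth stating explicitly is the regularity of $R$ needed for the interior stationarity condition; specializing to $R(\theta) = \tfrac{1}{2\eta}\|\theta\|_2^2$ recovers exactly the $\|\theta_{T+1}\|$-dependent coverage bound of \citet{gibbs2021adaptive}.
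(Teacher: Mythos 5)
Your proof is correct and follows essentially the same argument as the paper: read the first-order stationarity condition for the regularized empirical minimizer $\theta_{T+1}$ coordinatewise, substitute the pinball subgradient $(\one[\hat\tau_s\ge\tau_s]-q)g_s$, and identify the resulting sum with $T_i(\texttt{Cov}(\Pi_T,G_i)-q)$. Your explicit remark about fixing a consistent subgradient selection at the kink $\hat\tau=\tau$, and about the differentiability/interiority needed for the stationarity condition, is a welcome bit of extra care that the paper's terse proof leaves implicit.
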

\begin{proof}
    Pinball loss is convex, and so to apply FTRL, we feed the algorithm the linear surrogate loss $\ell(\theta,\tau_t) \doteq \langle \theta, \nabla_\theta p_{q}(\langle \theta_t, g_t \rangle, \tau_t) \rangle$. We can compute:
    $$\ell(\theta,\tau_t) = \begin{cases}
-\,q\,\langle \theta,g_t\rangle, 
& \text{if } \tau_t > \langle \theta_t, g_t\rangle,\\[6pt]
(1 - q)\,\langle \theta,g_t\rangle, 
& \text{if } \tau_t \leq \langle \theta_t, g_t\rangle.
\end{cases}$$

The gradient of the loss at round $t$ with respect to $\theta$ is therefore:
    $$\nabla_\theta \ell(\theta, \tau_t) = \begin{cases}
-\,q\,g_t, 
& \text{if } \tau_t > \langle \theta_t, g_t\rangle,\\[6pt]
(1-q)\,g_t, 
& \text{if } \tau_t \leq \langle \theta_t, g_t\rangle.
\end{cases}$$

FTRL with regularizer $R$ plays the action $\theta_t$ at round $t$ that solves:
$$\theta_t = \arg\min_{\theta}  \sum_{s=1}^{t-1} \ell(\theta,\tau_s) + R(\theta)$$
First order optimality conditions imply that:
$$\sum_{s=1}^{t-1}\nabla_\theta \ell(\theta_t, \tau_s) + \nabla R(\theta_t) = 0$$
Or equivalently, 
\begin{eqnarray*}
\nabla R(\theta_t) &=& \sum_{s : \tau_s > \langle \theta_s, g_s\rangle} q g_s + \sum_{s : \tau_s \leq \langle \theta_s, g_s\rangle}(q-1)g_s \\
&=& \sum_{s=1}^{t-1} g_s (q - 1[\tau_s \leq \hat \tau_s]) 
\end{eqnarray*}
Hence we can bound the miscoverage rate for every group $i$ at time $T$ can be bounded as:
\begin{eqnarray*}
    |\texttt{Cov}(\Pi_T, G_i) - q| \leq \frac{||\nabla R(\theta_{T+1})||_\infty}{T_i}
\end{eqnarray*}
\end{proof}

This theorem tells us whenever we can upper-bound $||\nabla R(\theta_{T+1})||_\infty$ by any function that grows sublinearly with $T$, we get a non-trivial group conditional coverage bound. In the following section we do this for online gradient descent, an especially simple instantiation of FTRL.  

\section{Group Conditional ACI}
\label{sec:groupwise-cov}
Algorithms such as ACI (``Adaptive Conformal Inference'') from \cite{gibbs2021adaptive} can be seen as special cases of the connection between FTRL and coverage guarantees we have shown --- in particular the special case in which we ask only for marginal coverage, and use gradient descent with step size $\eta$, which is an instantiation of FTRL in which the regularization function $R(\theta) = \frac{1}{2\eta}||\theta||^2$. We give the ``gradient descent'' implementation of our algorithm in Algorithm \ref{alg:group-coverage}.

\begin{algorithm}
    \caption{Group Conditional ACI (GCACI)}
    \label{alg:group-coverage}
\begin{algorithmic}
 \STATE {\bfseries Input:} Timesteps $T$, number of groups $k$, coverage target $q$, step-size $\eta$
\STATE Choose $\theta_1 = \textbf{0}$. 
\FOR{$t = 1, 2, \cdots T$}
    \STATE Receive $\textbf{g}_t$ from the adversary.
    \STATE Predict $\hat{\tau}_t = \langle \theta_t, \textbf{g}_t \rangle$.
    \STATE Receive $\tau_t$ from adversary. 
    \IF{$\langle \theta_t, \textbf{g}_t \rangle < \tau_t $} 
        \STATE $\theta_{t+1} = \theta_t + \eta \cdot q \cdot \textbf{g}_t$ ~~~~~~~~~~~~~~~~\text{(Update A)}
    \ELSE
        \STATE $\theta_{t+1} = \theta_t - \eta \cdot (1-q) \cdot \textbf{g}_t$ ~~~~~~~~~\text{(Update B)}
    \ENDIF
\ENDFOR
\end{algorithmic}
\end{algorithm}

We can instantiate our Theorem \ref{thm:FTRL} to bound the group conditional miscoverage of Algorithm \ref{alg:group-coverage}, as it is a special case of FTRL. 

\begin{restatable}{lemma}{groupcov}
    \label{lem:group-cov}
    Running Algorithm \ref{alg:group-coverage} for any number of rounds $T$ with a coverage target of $q$ for any set of $k$ group functions, we achieve group conditional miscoverage bounded by the following function of  $\theta_{T+1}$:
    \begin{equation*}
        \left|\texttt{Cov}(\Pi_T, G_i)-q\right| \leq \frac{||\theta_{T+1}||_\infty}{T_i \eta}
    \end{equation*}
\end{restatable}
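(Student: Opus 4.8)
The plan is to obtain this as a direct corollary of Theorem~\ref{thm:FTRL}. The first step is to verify that Algorithm~\ref{alg:group-coverage} is exactly the instantiation of the FTRL scheme of Algorithm~\ref{alg:ftrl} with the Euclidean regularizer $R(\theta) = \tfrac{1}{2\eta}\|\theta\|_2^2$. As computed in the proof of Theorem~\ref{thm:FTRL}, the linear surrogate loss satisfies $\nabla_\theta \ell(\theta,\tau_t) = -q\,\mathbf{g}_t$ when $\tau_t > \langle\theta_t,\mathbf{g}_t\rangle$ and $\nabla_\theta \ell(\theta,\tau_t) = (1-q)\,\mathbf{g}_t$ otherwise, and this vector does not depend on $\theta$. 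Combining this with $\nabla R(\theta) = \theta/\eta$ and the first-order optimality condition $\sum_{s=1}^{t-1}\nabla_\theta\ell(\theta_t,\tau_s) + \nabla R(\theta_t) = 0$, one reads off $\theta_t = -\eta\sum_{s=1}^{t-1}\nabla_\theta\ell(\theta_s,\tau_s)$, hence the recursion $\theta_{t+1} = \theta_t + \eta q\,\mathbf{g}_t$ on rounds where $\langle\theta_t,\mathbf{g}_t\rangle < \tau_t$ and $\theta_{t+1} = \theta_t - \eta(1-q)\,\mathbf{g}_t$ otherwise, with $\theta_1 = \mathbf{0}$ --- precisely Update~A and Update~B.

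Given this identification, Theorem~\ref{thm:FTRL} immediately yields $|\texttt{Cov}(\Pi_T,G_i) - q| \leq \|\nabla R(\theta_{T+1})\|_\infty / T_i$, and substituting $\nabla R(\theta_{T+1}) = \theta_{T+1}/\eta$ gives the claimed bound $|\texttt{Cov}(\Pi_T,G_i) - q| \leq \|\theta_{T+1}\|_\infty/(T_i\eta)$. If a self-contained argument is preferred, the same conclusion follows by telescoping the updates from $\theta_1 = \mathbf{0}$: using the elementary identity $q\,\mathbf{1}[\hat\tau_t < \tau_t] - (1-q)\,\mathbf{1}[\hat\tau_t \geq \tau_t] = q - \mathbf{1}[\hat\tau_t \geq \tau_t]$, one gets $\theta_{T+1} = \eta\sum_{t=1}^T \mathbf{g}_t\,(q - \mathbf{1}[\hat\tau_t \geq \tau_t])$; reading off coordinate $i$ and recalling $T_i = \sum_t g_{t,i}$ and $\texttt{Cov}(\Pi_T,G_i) = \tfrac{1}{T_i}\sum_t g_{t,i}\,\mathbf{1}[\hat\tau_t\geq\tau_t]$ gives $\theta_{T+1,i} = \eta T_i\,(q - \texttt{Cov}(\Pi_T,G_i))$, from which the bound is immediate after dividing by $\eta T_i$ and upper-bounding $|\theta_{T+1,i}|$ by $\|\theta_{T+1}\|_\infty$.

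There is no substantive obstacle: the lemma really is just a corollary of Theorem~\ref{thm:FTRL}. The only points needing care are bookkeeping ones --- matching the strict inequality $\langle\theta_t,\mathbf{g}_t\rangle < \tau_t$ triggering Update~A to the case split in the surrogate-loss gradient (where the boundary $\tau_t = \langle\theta_t,\mathbf{g}_t\rangle$ is assigned to the ``covered'' branch, consistently with the convention that $y_t$ is covered iff $\hat\tau_t \geq \tau_t$), confirming that $\theta_1 = \mathbf{0}$ corresponds to the empty-sum regularized minimizer, and noting that the group functions must be prediction-independent so that $\mathbf{g}_t$ is available before $\hat\tau_t$ is played --- exactly the hypothesis already required by Theorem~\ref{thm:FTRL}.
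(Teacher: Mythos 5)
Your proof takes essentially the same route as the paper's: identify Algorithm~\ref{alg:group-coverage} as FTRL with Euclidean regularizer $R(\theta) = \tfrac{1}{2\eta}\|\theta\|_2^2$, compute $\nabla R(\theta_{T+1}) = \theta_{T+1}/\eta$, and plug into Theorem~\ref{thm:FTRL}. Your additional self-contained telescoping derivation of $\theta_{T+1,i} = \eta T_i (q - \texttt{Cov}(\Pi_T, G_i))$ is correct and is in fact the argument the paper gestures at implicitly; both arguments are sound and the result follows.
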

\begin{proof}
    Algorithm \ref{alg:group-coverage} is an instantiation of follow the regularized leader as analyzed in Theorem \ref{thm:FTRL} with regularization function $R(\theta) = \frac{1}{2\eta}||\theta||^2$. We can compute $\nabla R(\theta_{T+1}) = \frac{1}{\eta} \cdot \theta_{T+1}$. Plugging this into Theorem \ref{thm:FTRL} gives the stated bound.
\end{proof}

If we are able to upper-bound the magnitude of the last iterate of gradient descent as a sublinear function of $T$, we can bound the deviation from desired coverage not just marginally, but groupwise for arbitrary intersecting groups:

\begin{restatable}{lemma}{iterateval}
    When Algorithm \ref{alg:group-coverage} is run with step-size $\eta \in (0,1]$, for any collection of $k$ group functions, any coverage target $q \in (0,1)$, and every $T$,  the iterate $\theta_{T+1}$ has norm bounded as:
    $$||\theta_{T+1}||_\infty \leq \mathcal{O}(\sqrt{\eta T (\eta k + 1)})$$    
\end{restatable}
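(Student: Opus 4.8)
The plan is to bound $\|\theta_{T+1}\|_\infty$ through the larger quantity $\|\theta_{T+1}\|_2$ (using $\|\theta\|_\infty \le \|\theta\|_2$ coordinatewise), and to control $\|\theta_t\|_2$ by a one-step potential argument on $\Phi_t \doteq \|\theta_t\|_2^2$. A purely coordinatewise random-walk-with-drift argument is awkward here because the sign of the update is determined by the \emph{whole} inner product $\hat\tau_t = \langle \theta_t, \mathbf{g}_t\rangle$ rather than by the sign of any single coordinate of $\theta_t$; the Euclidean potential sidesteps this. The structural fact I will lean on is that Algorithm~\ref{alg:group-coverage} steps in direction $+\mathbf{g}_t$ precisely when $\hat\tau_t < \tau_t$ and in direction $-\mathbf{g}_t$ precisely when $\hat\tau_t \ge \tau_t$, and that $\tau_t \in [0,1]$.

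Concretely, I would write every update uniformly as $\theta_{t+1} = \theta_t + s_t \eta c_t \mathbf{g}_t$ with $(s_t, c_t) = (+1, q)$ on Update~A and $(s_t, c_t) = (-1, 1-q)$ on Update~B, so $c_t \in (0,1)$. Expanding the square,
\[
\Phi_{t+1} = \Phi_t + 2 s_t \eta c_t \hat\tau_t + \eta^2 c_t^2 \|\mathbf{g}_t\|_2^2 .
\]
The quadratic term is at most $\eta^2 k$ because $\|\mathbf{g}_t\|_2^2 = \sum_i g_{t,i}^2 \le k$ (each $g_{t,i} \in [0,1]$) and $c_t^2 \le 1$. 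For the cross term I split on the update: under Update~B, $s_t = -1$ and $\hat\tau_t \ge \tau_t \ge 0$, so $2 s_t \eta c_t \hat\tau_t \le 0$; under Update~A, $s_t = +1$ and $\hat\tau_t < \tau_t \le 1$, so $2 s_t \eta c_t \hat\tau_t \le 2\eta c_t \le 2\eta$ (and it is negative when $\hat\tau_t < 0$). Either way $\Phi_{t+1} \le \Phi_t + 2\eta + \eta^2 k$.

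Summing this inequality over $t = 1, \dots, T$ and using $\theta_1 = \mathbf{0}$ gives $\Phi_{T+1} = \|\theta_{T+1}\|_2^2 \le \eta T (2 + \eta k)$, and hence $\|\theta_{T+1}\|_\infty \le \sqrt{\eta T(2 + \eta k)} = \mathcal{O}(\sqrt{\eta T(\eta k + 1)})$. I do not anticipate a real obstacle; the one spot that needs attention is the cross-term case analysis, in particular noticing that on Update~A the potential rises by only $O(\eta)$ because the update is triggered by $\hat\tau_t < \tau_t \le 1$, which caps $\hat\tau_t$ from above --- this is exactly where the boundedness of the nonconformity scores and the comparison structure of the ACI update enter. (The hypothesis $\eta \in (0,1]$ is not actually needed for this bound; it only normalizes the statement.)
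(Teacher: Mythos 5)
Your proof is correct and follows essentially the same approach as the paper's: both track the potential $\|\theta_t\|_2^2$, observe that the trigger condition $\hat\tau_t < \tau_t \le 1$ (resp.\ $\hat\tau_t \ge \tau_t \ge 0$) caps the cross term from above (resp.\ makes it nonpositive), and sum the per-step increment to obtain the $O(\sqrt{\eta T(\eta k + 1)})$ bound. The only differences are cosmetic --- you absorb $q$ and $1-q$ into a single $c_t \le 1$ where the paper keeps $\max\{q,1-q\}^2$, and your observation that the hypothesis $\eta \in (0,1]$ is not actually used in the argument is also correct.
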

\begin{proof}
First, note that since the non-conformity scores are assumed to be bounded in $[0,1]$, we must have for every $t$ that $\tau_t \in [0,1]$. So, if $\langle \theta_t, g_t \rangle < 0$, we must also have $\langle \theta_t, g_t \rangle \leq \tau_t$ which triggers Update A. Similarly, whenever $\langle \theta_t, g_t \rangle \geq 1$, this necessarily triggers update B. Said another way: if the update A was triggered at round $t$ we know that   $\langle \theta_t, g_t \rangle < 1$, whereas if update B was triggered, we know that $\langle \theta_t, g_t \rangle \geq 0$. We consider these two cases separately.

\textbf{Case 1 (Update A triggered):} We can compute
\begin{align*}
    \lVert \theta_{t+1} \rVert_2^2 &= \lVert \theta_{t} \rVert_2^2 + \eta^2q^2 \lVert g_t \rVert_2^2 + 2\eta q \langle \theta_t, g_t \rangle \\ 
    &\leq \lVert \theta_t \rVert_2^2 + \eta^2q^2 k + 2\eta q
\end{align*}

\textbf{Case 2 (Update B triggered):} Similarly,
\begin{align*}
    \lVert \theta_{t+1} \rVert_2^2 &= \lVert \theta_{t} \rVert_2^2 + \eta^2(1-q)^2 \lVert g_t \rVert_2^2 - 2\eta(1-q) \langle \theta_t, g_t \rangle \\ 
    &\leq \lVert \theta_t \rVert_2^2 + \eta^2(1-q)^2 k
\end{align*}
As initially  $\lVert \theta_1 \rVert_2 = 0$, we obtain:
\begin{align*}
   \lVert \theta_{T+1} \rVert_2^2 &\leq T \left(\eta^2 k \max\{q, 1-q\}^2 + 2 \eta q\right) \\
   &\leq T \eta\left(\eta k \max\{q, 1-q\}^2 + 2 q\right)
\end{align*}
This immediately gives us a bound on the $L_{\infty}$ norm:
\begin{equation*}
    ||\theta_{T+1}||_{\infty} \leq \sqrt{T\eta}\sqrt{\eta k \max\{q, 1-q\}^2 + 2q}
\end{equation*}
\end{proof}

Putting these two lemmas together gives us a group conditional coverage bound for Algorithm \ref{alg:group-coverage} (GCACI):
\begin{theorem}Fix any collection of $k$ group functions taking values in $[0,1]$ and any target coverage rate $q \in (0,1)$. If we run Algorithm \ref{alg:group-coverage} for $T$ rounds with step size $\eta \in (0,1]$, we achieve group conditional miscoverage bounded by:

       $$ \left|\texttt{Cov}(\Pi_T, G_i)-q\right| \leq O\left(\frac{\sqrt{\eta T(\eta k+1)}}{T_i \eta}\right)$$

\end{theorem}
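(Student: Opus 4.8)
The plan is to obtain the claimed bound as an immediate consequence of the two preceding lemmas, since Algorithm~\ref{alg:group-coverage} has already been identified as the instantiation of FTRL with Euclidean regularizer $R(\theta) = \frac{1}{2\eta}\|\theta\|^2$. First I would invoke Lemma~\ref{lem:group-cov}, which bounds the group conditional miscoverage of each group $G_i$ by $\|\theta_{T+1}\|_\infty / (T_i \eta)$. This reduces the entire statement to controlling the magnitude of the final iterate.

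Next I would apply the iterate norm lemma, which gives $\|\theta_{T+1}\|_\infty \leq O(\sqrt{\eta T(\eta k + 1)})$ for any step size $\eta \in (0,1]$, any coverage target $q \in (0,1)$, and any collection of $[0,1]$-valued group functions. Substituting this into the bound from Lemma~\ref{lem:group-cov} yields
\[
\left|\texttt{Cov}(\Pi_T, G_i) - q\right| \;\leq\; \frac{\|\theta_{T+1}\|_\infty}{T_i \eta} \;\leq\; O\!\left(\frac{\sqrt{\eta T(\eta k + 1)}}{T_i \eta}\right),
\]
which is exactly the claim. Along the way the only bookkeeping is to absorb the factor $\max\{q,1-q\}^2 \leq 1$ and the other $q$-dependent constants from the iterate lemma's proof into the $O(\cdot)$, which is harmless since $q$ is fixed.

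Because of this, I do not expect a genuine obstacle in the theorem itself — the mathematical content lives entirely in the two cited lemmas. If I had to identify the delicate step, it is the iterate norm lemma, whose proof crucially uses that all realized thresholds lie in $[0,1]$ so that the inner product terms $\langle \theta_t, g_t\rangle$ appearing in the recursion for $\|\theta_{t+1}\|_2^2$ can be bounded on the ``bad'' side: an out-of-range prediction always triggers the update pushing it back toward $[0,1]$, so the otherwise-problematic cross term becomes a controlled $O(\eta)$ per-round increment rather than something that could grow the norm in the wrong direction. I would also remark explicitly that the resulting bound is informative only when the group is hit often enough, i.e.\ when $T_i \gg \sqrt{T(\eta k + 1)/\eta}$, and that one may optimize over $\eta$ to trade off the $\eta k$ and $1/\eta$ contributions inside the square root.
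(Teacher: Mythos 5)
Your proposal is correct and matches the paper's approach exactly: the theorem is stated immediately after the two lemmas precisely because it is their direct composition, i.e.\ substitute the iterate-norm bound $\|\theta_{T+1}\|_\infty \leq O(\sqrt{\eta T(\eta k+1)})$ into the miscoverage bound $\|\theta_{T+1}\|_\infty/(T_i\eta)$ from Lemma~\ref{lem:group-cov}. Your added remarks about where the real work lives (the iterate-norm lemma and the role of $\tau_t\in[0,1]$ in controlling the cross term) are accurate but not needed for the theorem itself.
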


When we set $\eta = 1$, this gives us a $O(\sqrt{T k}/T_i)$ group conditional coverage error bound. This analysis is tight even for $k = 1$ if we allow the groups to be real valued.
\begin{restatable}{theorem}{counterexample}
    Let $k = \eta = 1$ and pick any coverage target $q \in (0,1)$. The sequence of 1-dimensional weighting functions $g_t = \frac{1}{2\sqrt{t-1}}$ together with thresholds $\tau_t = 1$ causes Algorithm \ref{alg:group-coverage} to produce parameter vector $\theta_{T+1} \in \Omega(\sqrt{T})$. 
\end{restatable}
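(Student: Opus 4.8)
The plan is to track the scalar iterates $\theta_t$ produced by Algorithm~\ref{alg:group-coverage} on this specific adversarial sequence and show that ``Update~A'' fires on \emph{every} round, so that $\theta_{T+1}$ equals the prefix sum $q\sum_{t=1}^{T} g_t$, which is $\Theta(\sqrt{T})$. I would argue by induction on $t$. Since $\theta_1 = 0$ and $k = 1$, the first prediction is $\hat\tau_1 = \theta_1 g_1 = 0 < 1 = \tau_1$, so round $1$ triggers Update~A and $\theta_2 = \theta_1 + \eta q g_1 = q g_1$. Inductively, if Update~A has fired on rounds $1,\dots,t-1$ then $\theta_t = q\sum_{s=1}^{t-1} g_s$.

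The crux is to show $\hat\tau_t = \theta_t g_t < 1 = \tau_t$, so that Update~A fires again. Using $\sum_{u=1}^{m} u^{-1/2} \le 2\sqrt{m}$, one gets $\theta_t = q\bigl(g_1 + \tfrac12\sum_{u=1}^{t-2} u^{-1/2}\bigr) \le q\bigl(g_1 + \sqrt{t-2}\bigr)$, and multiplying by $g_t = \tfrac{1}{2\sqrt{t-1}}$ yields $\theta_t g_t \le \tfrac{q}{2}\bigl(g_1/\sqrt{t-1} + 1\bigr) \le q < 1$ for every $t \ge 2$ (since $q \in (0,1)$ and $g_1 \le 1$). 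This closes the induction, so Update~A fires at every round and $\theta_{T+1} = q\sum_{t=1}^{T} g_t$. The matching lower bound $\sum_{u=1}^{m} u^{-1/2} \ge 2\sqrt{m+1}-2$ then gives $\theta_{T+1} \ge q(\sqrt{T}-1) \in \Omega(\sqrt{T})$, treating the fixed $q$ as a constant.

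One bookkeeping point: $g_1 = \tfrac{1}{2\sqrt{0}}$ is formally undefined, so I would either re-index the sequence to start at $t = 2$ (with $\theta_2 = 0$) or set $g_1$ to an arbitrary value in $[0,1]$; since the $\Omega(\sqrt{T})$ conclusion is insensitive to one bounded term, this choice does not matter.

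I expect the self-consistent (``bootstrap'') verification that $\hat\tau_t = \theta_t g_t$ never reaches $\tau_t = 1$ to be the only step needing care, and it is precisely what makes the example tight: $\theta_t$ genuinely grows like $\sqrt{t}$, but because $g_t$ decays at the reciprocal rate the realized prediction stays pinned near $q/2$, so the algorithm never receives the over-covering feedback (Update~B) that would pull $\theta_t$ back down. The mild subtlety is to keep the constants honest --- checking $\tfrac{q}{2}(1 + g_1/\sqrt{t-1}) < 1$ for \emph{every} $t \ge 2$ and every $q \in (0,1)$, not merely asymptotically.
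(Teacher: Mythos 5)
Your proof is correct and takes essentially the same route as the paper's: an induction showing that Update~A fires at every round because $\theta_t g_t$ stays strictly below $\tau_t = 1$, after which $\theta_{T+1} = q\sum_t g_t = \Omega(\sqrt{T})$. You are somewhat more explicit about the constants (the $\theta_t g_t \le q < 1$ bound) and about the $g_1 = 1/(2\sqrt{0})$ edge case, which the paper resolves by simply setting $g_1 = 0$; the substance is identical.
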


We remark that this lower bound construction seems to require real valued group functions. We conjecture that a much better upper bound on $||\theta_{T+1}||_\infty$ is true for \emph{binary} valued group functions --- growing much more slowly with (or perhaps even independently of) $T$. Our experiments support this conjecture, but we are unable to prove it.

\section{Experiments}
 \label{sec:experiments}

In this section we compare the performance of Algorithm \ref{alg:group-coverage} with that of the MVP (``multi valid predictor'') algorithm (\cite{bastanipractical}), that to our knowledge is the only other  method for obtaining non-trivial group-conditional coverage guarantees in sequential adversarial settings. We run experiments on the same collection of datasets used to evaluate MVP in \cite{bastanipractical}. We compare rates of convergence to the desired coverage over all groups. Since the guarantees for our algorithm are more fine grained, and are proven in terms of $||\theta_t||_\infty$, we plot also the $L_{\infty}$ norm of the parameters $\theta_t$ maintained by Algorithm \ref{alg:group-coverage} over time. To achieve our derived $O(\sqrt{T k}/T_i)$ bounds we set the learning rate $\eta = 1$ for these experiments. We also then empirically investigate the relationship between the rate of convergence to the target coverage rate and the learning rate, by measuring the time-step\footnote{Here, time-step is defined as within the subsequence defined by a group, not the full sequence.} at which the empirical group conditional coverage for the rest of the sequence falls within $\epsilon$ of the desired coverage rate, as a function of $\eta$. We set $\epsilon = 0.01$ for all tests.

\paragraph{Time Series Data}  We replicate the prediction task described first in \cite{gibbs2021adaptive}, for testing the ACI algorithm's ability to achieve marginal coverage, which uses AMD stock market data from the WSJ daily price across years 2000-2020. The dataset gives price points $\{p_t\}_{t=1}^T$ of the stock for $T = 5283$. Using this data, we compute the daily return $r_t$, defined as $r_t = \frac{p_t - p_{t-1}}{p_{t-1}}$, which correspondingly defines the daily realized volatility $v_t = r_t^2$. The task is to predict this volatility. Using the predictive model GARCH (\cite{garch_paper}), which makes a prediction of the volatility $\hat{v}_t$, the non-conformity score used on day $t$ is $f_t(x, y) = \frac{|y - \hat{v}_t|}{\hat{v}_t}$, normalized to ensure scores are always in the range $[0,1]$\footnote{Note that though the feature vector $x$ is included generically here as an argument in the non-conformity score, the GARCH model typically uses only past volatility data to make predictions for the next time-step in the sequence.}. Then, as in \cite{bastanipractical}, we define the collection of 20 groups $\{G_i\}_{i=1}^{20}$, where $G_i$ includes all time-steps $t$ for which $t \equiv 0$ (mod $i$), and introduce artificial noise to group data in the following way - for each time-step $t$, we add noise $\mathcal{N}(0, \hat{\sigma}_r)$ to the return value $t$ for each group in $G_i$ that $t$ is included in, where $\hat{\sigma}_r$ is the standard deviation of the original return sequence. We run both GCACI and MVP on this data, asking for a desired group conditional coverage level of $q = 0.9$. In \cite{bastanipractical}, they show that MVP achieves the desired group conditional coverage while ACI is unable to. Here, we see that GCACI not only achieves group conditional coverage, but converges at much quicker rates than MVP. 

\begin{figure*}[t!]
\begin{minipage}[b]{0.45\linewidth}
    \centering
    \includegraphics[width=\linewidth]{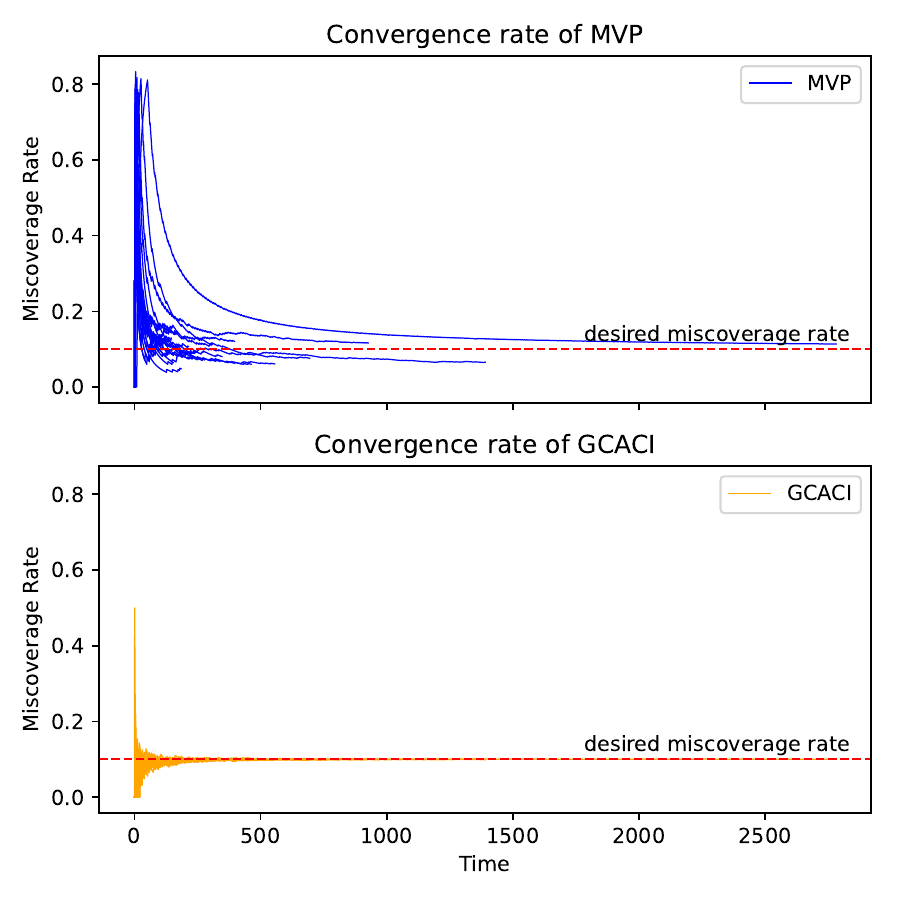}
\end{minipage}
\begin{minipage}[b]{0.45\linewidth}
    \centering
    \includegraphics[width=\linewidth]{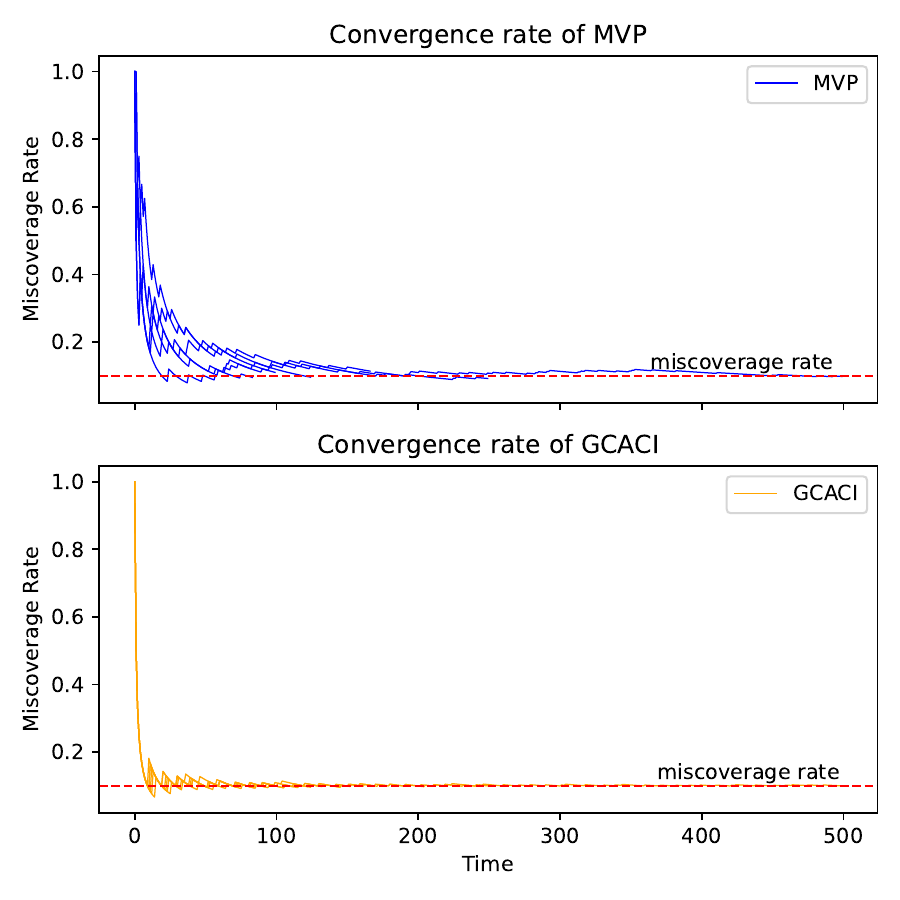}
\end{minipage}
\centering
\begin{minipage}[b]{0.45\linewidth}
    \centering
    \includegraphics[width=\linewidth]{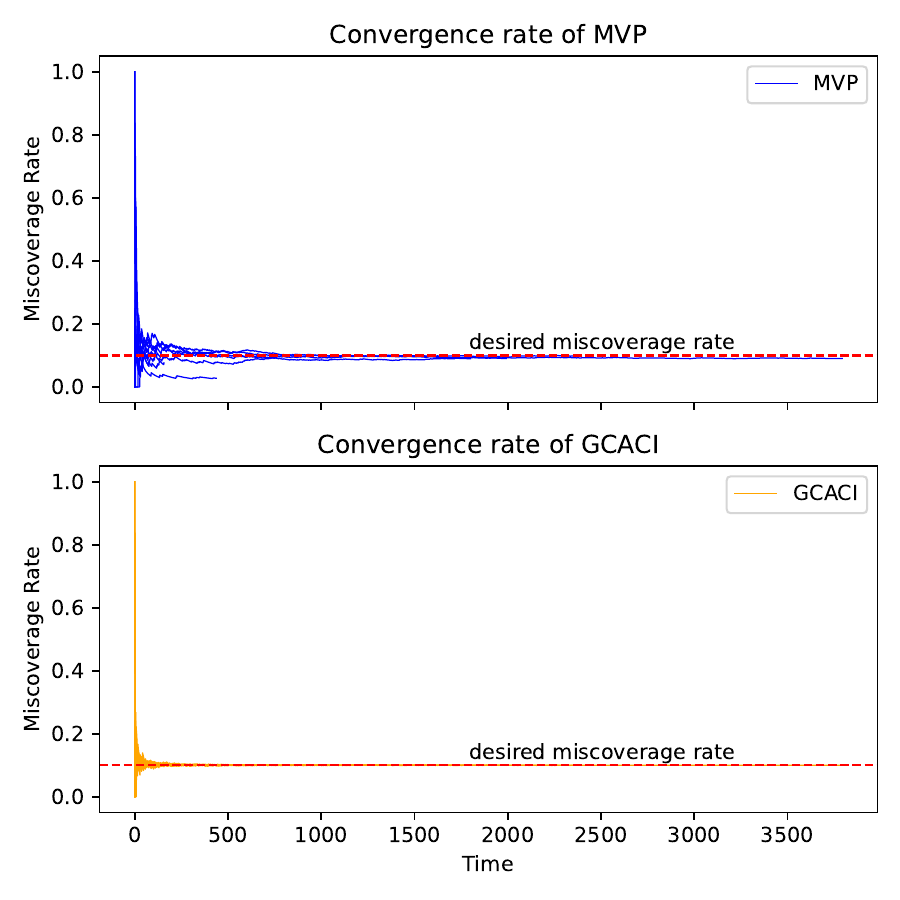}
\end{minipage}
\caption{Comparison of convergence rates between GroupACI (GCACI) and MVP for group coverage. Each curve captures the averaged miscoverage over time of a single group. The top left is the comparison for time series data, the top right is UCI Airfoil data, and the bottom is Folktables data. Note that group size varies for curves in each graph.}
\label{fig:convergence}
\end{figure*}

\paragraph{Synthetic Distribution Shift (UCI Airfoil Data)} We run both MVP and GCACI on the airfoil dataset from the UCI Machine Learning Repository (\cite{Dua:2019}), which consists of $1503$ instances of NASA airfoil blades; the task is to predict the Scaled Sound Pressure Level (SSPL). In \cite{bastanipractical}, they compare against the performance of the weighted split conformal prediction algorithm of \cite{tibshirani2019conformal}, and test only for marginal coverage. Following their approach, we use 25\% of the data to train a linear regression model $g: \cX \to \mathbb{R}$, which defines the non-conformity score $f(x, y) = |g(x) - y|$. Another 25\% of the data is used as is, and the final 50\% of the data is sampled (with replacement)  using exponential tilting - each datapoint $x$ is drawn with probability proportional to $\exp(\langle x, \beta \rangle)$, where we set $\beta = (-1, 0, 0, 0, 1)$ as in \cite{tibshirani2019conformal} and \cite{bastanipractical}, representing synthetic covariate shift. The test set is sequenced such that the original (unshifted) data comes first, followed by the shifted data. Then, as in the previous section, we define a set of six groups $\{G_i\}_{i=1}^{6}$ where membership is again defined by time-step, i.e. $G_i$ includes all time-steps for which $t \equiv 0$ (mod $i$). Both algorithms are run with a desired coverage rate $q = 0.9$.

\begin{figure}[t!]
    \centering
    \includegraphics[width=0.7\linewidth, height = 9cm]{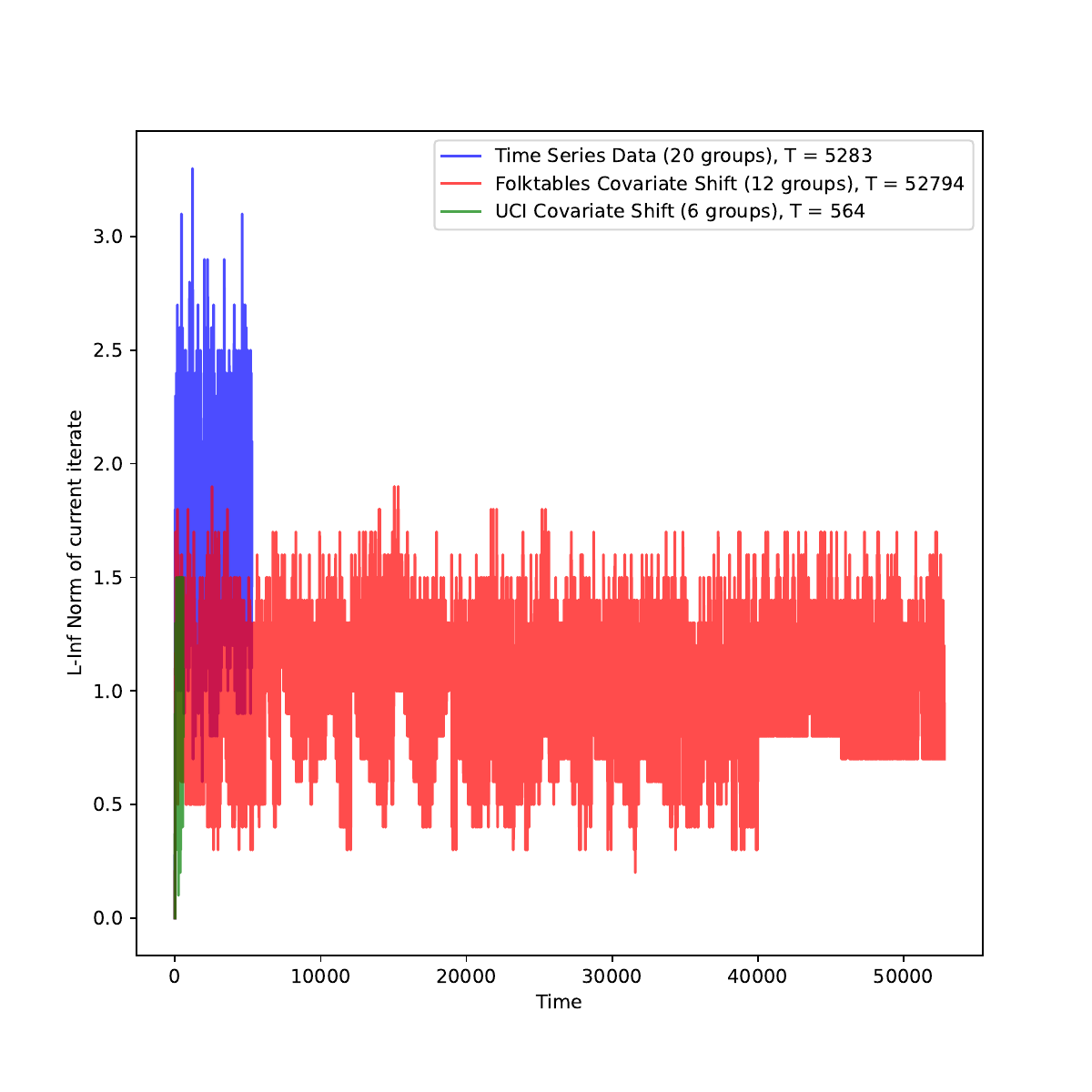} 
    \caption{$||\theta_t||_{\infty}$ over time for all three experiments, when running GCACI.}
    \label{fig:norms}
\end{figure}

\begin{figure*}[b!]
\begin{minipage}[b]{0.33\linewidth}
    \centering
    \includegraphics[width=\linewidth]{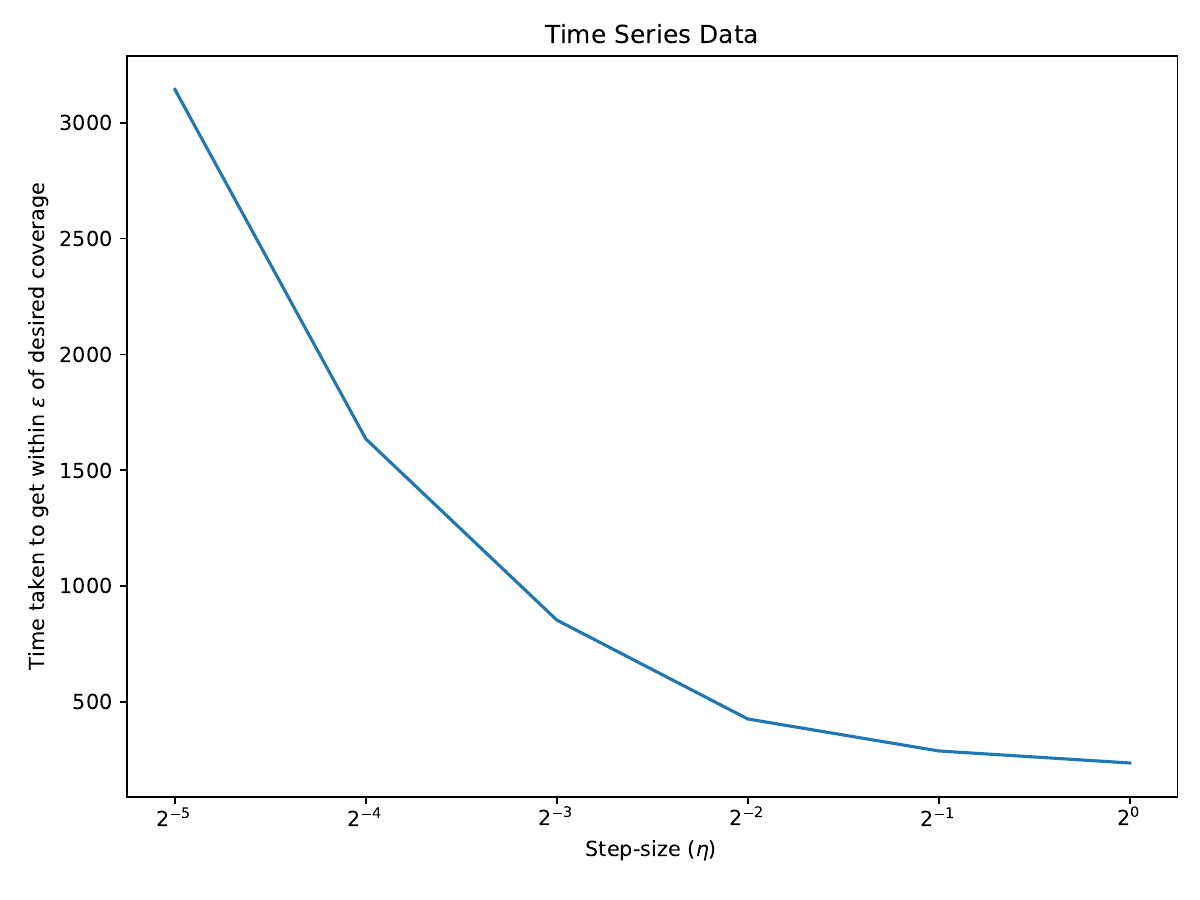}
\end{minipage}
\begin{minipage}[b]{0.33\linewidth}
    \centering
    \includegraphics[width=\linewidth]{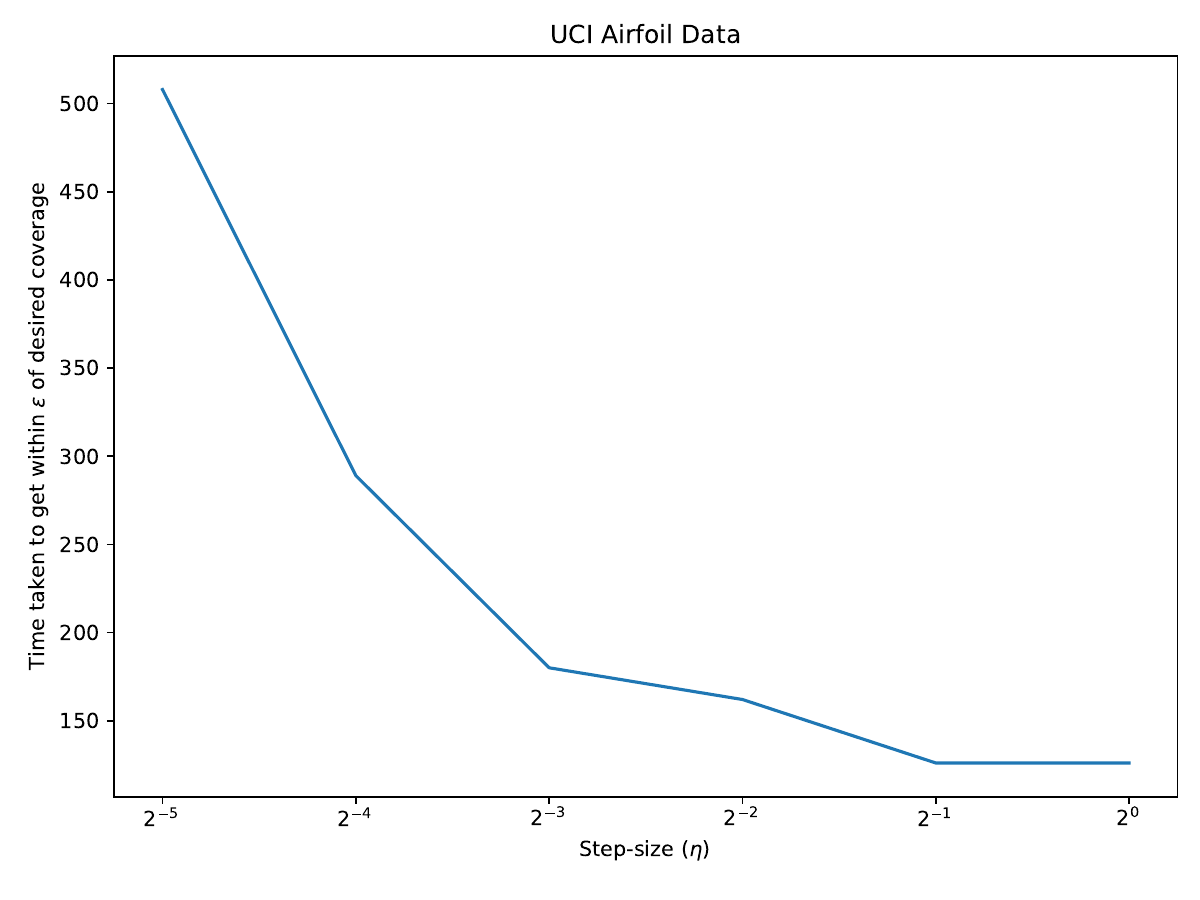}
\end{minipage}
\begin{minipage}[b]{0.33\linewidth}
    \centering
    \includegraphics[width=\linewidth]{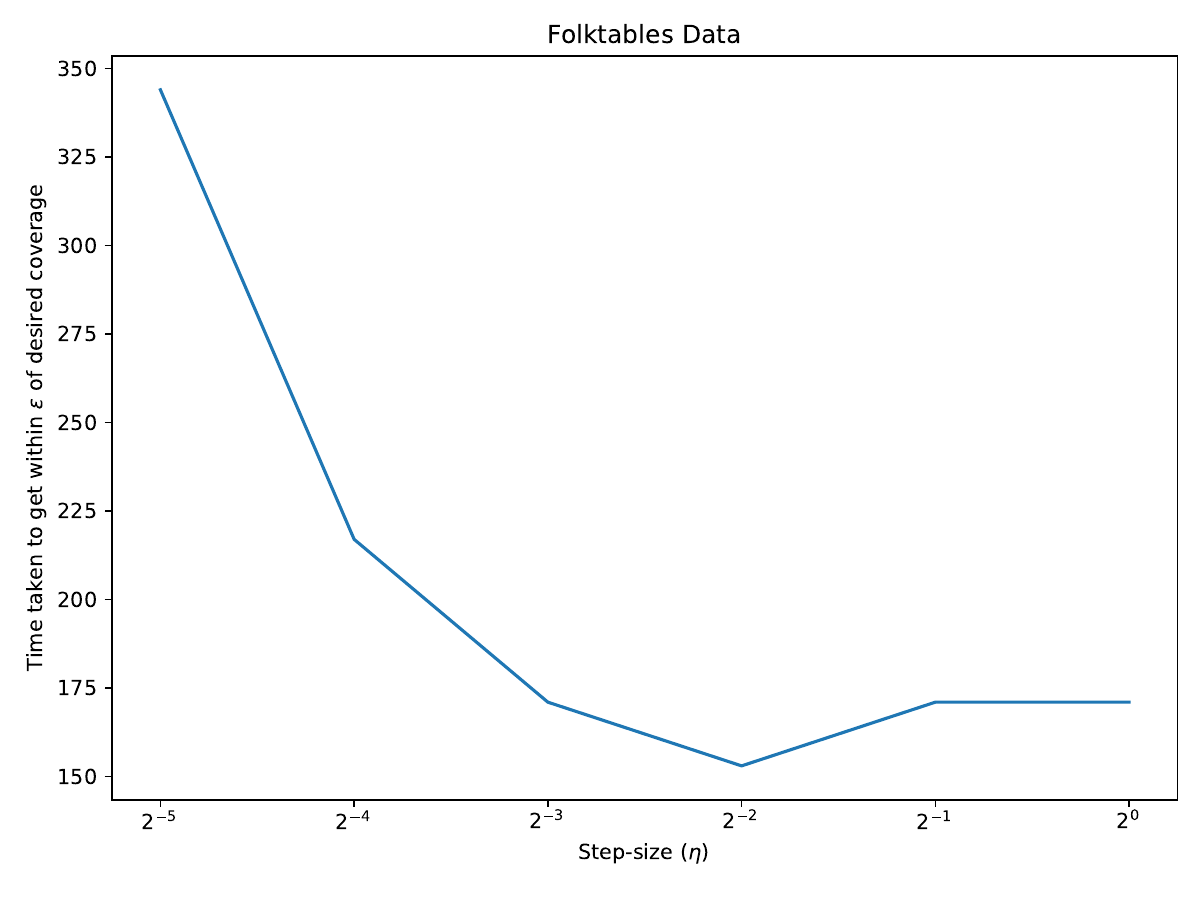}
\end{minipage}
\caption{Convergence rate of GCACI as a function of the learning rate $\eta$. Each plot measures (across different chosen learning rates) the earliest time-step at which coverage for each group is within $\epsilon = 0.01$ of the desired coverage for the rest of the transcript.}
\label{fig:learning-rate}
\end{figure*}

\paragraph{Natural Distribution Shift (Folktables)} Finally, we compare performance of MVP and GCACI on a distribution shift problem using 2018 Census data from the Folktables repository \cite{ding2021retiring}. The task involves predicting individuals' income. We use census data from two different states (California \& Pennsylvania) and sample 0.2 of both states to get a test set with $N = 52794$ data points. The data is sequenced with all CA datapoints first, giving us distribution shift from a natural source, going beyond covariate shift. A quantile regression model $h: \cX \to \mathbb{R}$ is trained on 50\% of the remaining California data, defining the fixed non-conformity score $f(x,y) = |h(x) - y|$. We define 12 total groups, over all nine codes for race\footnote{In \cite{bastanipractical}, four of the race groups are omitted due to being very small fractions of the overall dataset. We include even these small-sized groups to illustrate that GCACI is able to converge quickly even for such groups.} available in the Folktables dataset, two groups for sex, as well as the group including all data points. We run both algorithms with a desired coverage rate of $q = 0.9$. 

\subsection{Results}
\label{subsec:results}
Figure \ref{fig:convergence} compares how quickly the two algorithms are able to achieve the desired miscoverage rate. We see that convergence is substantially faster for our algorithm --- despite the fact that both algorithms have similar $O(\sqrt{T})$ guarantees for worst-case coverage rates. MVP doesn't even converge fully for some smaller-sized groups. We also find that for GCACI, the $O(\sqrt{T})$ upper bound on $||\theta_T||_\infty$ appears to be very loose, at least in the setting of our evaluation. Figure \ref{fig:norms} shows that for each experiment, it remains bounded by a small constant, explaining our superior observed coverage performance --- because in our experiments, $||\theta_t||_\infty$ remains bounded by a small constant at all iterates $t$, we actually get groupwise coverage rates at $O(1/T)$. This supports our conjecture that much better bounds might be possible for binary group structure. Figure \ref{fig:learning-rate} plots how quickly GCACI converges as a function of the learning rate. We see that as expected, larger learning rates give faster convergence, with the algorithm generally converging most quickly with a learning rate of $\eta = 1$. This naturally trades off with the regret guarantees of follow the regularized leader, which are optimized in the worst case when $\eta = 1/\sqrt{T}$ and vacuous for constant $\eta$. 

\bibliographystyle{plainnat}
\bibliography{refs}

\appendix
\onecolumn
\section{Proofs}
\iidloss*
\begin{proof}
Assume without loss of generality that $\tau' \leq \tau^*$. Define the probabilities $p_1 = \p(\tau \leq \tau')$, $p_2 = \p(\tau \geq \tau^*)$, and $p_3 = \p(\tau \in [\tau', \tau^*))$. We can compute the above expectation by looking at these three cases separately. When $\tau \leq \tau'$, the difference in loss is $(1-q)(
\tau' - \tau^*)$. Similarly, when $\tau \geq \tau^*$, the difference in loss is $q(\tau^* - \tau')$. Finally, when $\tau' \leq \tau \leq \tau^*$, 
\begin{align*}
   p_q(\tau', \tau) - p_q(\tau^*, \tau) &= q(\tau - \tau') - (1-q)(\tau^* - \tau) \\
   &= -q(\tau' - \tau^*) - (\tau^* - \tau) \\
   &= (1 - q)(\tau' - \tau^*) + (\tau - \tau')
\end{align*}
Weighting each of these expectations with their respective probabilities, 
\begin{align*}
   \E[p_q(\tau', \tau)] - \E[p_q(\tau^*, \tau)] &= p_1 (1-q) (\tau' - \tau^*) - p_2(q)(\tau^* - \tau') + p_3((1 - q)(\tau' - \tau^*)  + (\tau - \tau')) \\
   &= p_3(\tau - \tau')
\end{align*}
with the final simplication due to $p_1 + p_3 = q$, and $p_2 = 1-q$, by definition of $\tau^*$. Since $\mathcal{D}$ is $(\alpha, \rho, r)$-smooth, we can obtain a lower-bound on $p_3$ by taking a discrete sum over $1/r$ pieces of the interval (each of which has probability weight at least $\alpha)$, to get:
\begin{equation*}
    \E[p_q(\tau', \tau)] - \E[p_q(\tau^*, \tau)] \geq \frac{\alpha r \cdot (\tau^* - \tau')^2}{2}
\end{equation*}
as desired. The proof for the $\tau^* \leq \tau'$ case is nearly identical.
\end{proof}

\stochcov*
\begin{proof}
    Define the realized loss $L= \sum_{t=1}^T p_q(\hat{\tau}_t, \tau_t)$ and the loss with respect to any fixed threshold $a$, as $L_a= \sum_{t=1}^T p_q(a, \tau_t)$, where $\tau_t = f(x_t, y_t)$. The regret guarantee tells us that
    \begin{equation*}
        L - L_{\tau^*} \leq \gamma
    \end{equation*}
    for $\tau^* = \min_{\tau \in [0,1]} \E[L_\tau]$ - this is the $q$-th quantile of the distribution $\mathcal{D}$. For $0 \leq t \leq T$, define the sequence of random variables $X_t = \mathbb{E}[L | \Pi_t]$, adapted to the filtration $\{\Pi_t: t \geq 0\}$. Note that since $\E[X_{t+1}| \Pi_t] = X_t$, this sequence is a martingale. Since $X_0 = \E[L]$ and $X_T = L$, using Azuma's inequality, gives us:
    \begin{equation*}
        \p[L - \E[L] \geq \epsilon] \leq \exp\left(-\frac{\epsilon^2}{2T}\right)
    \end{equation*}
    Thus we obtain a bound on the difference between expected losses: 
    \begin{equation*}
        \E[L] - \E[L_{\tau^*}] \leq \gamma + \epsilon
    \end{equation*}
    with probability at least $1 - 2\exp\left(-\frac{\epsilon^2}{2T}\right)$. Using Lemma \ref{lem:iid} separately for the difference in losses for each time-step, 
    \begin{equation}
        \label{eq:ineq}
         \sum_{t=1}^T\frac{\alpha r \cdot (\tau^* - \hat{\tau}_t)^2}{2} \leq \gamma + \epsilon \implies \sum_{t=1}^T (\tau^* - \tau_t)^2 \leq \frac{2(\gamma + \epsilon)}{\alpha r }
    \end{equation}
    Now, define for each round $t$ the expected miscoverage $M_t = \E_{\tau \in \mathcal{D}}[1[\hat{\tau}_t \geq \tau]] - q$. Since we know $\tau^*$ achieves the optimal coverage $q$, $M_t = \p(\tau \in [\hat{\tau}_t, \tau^*])$ (or the interval $[ \tau^*, \hat{\tau}_t]$), and due to the smoothness condition, this implies that 
    \begin{equation*}
        |\hat{\tau}_t - \tau^*| \geq \frac{M_t}{\rho r} \implies (\tau^* - \hat{\tau}_t)^2 \geq \frac{M_t^2}{\rho r}
    \end{equation*}
    Combining with the inequality from $(\ref{eq:ineq})$, we get:
    \begin{align*}
        \sum_{t=1}^T M_t^2 \leq \frac{2\rho(\gamma + \epsilon)}{\alpha} &\implies \sum_{t=1}^T M_t \leq \sqrt{T}\sqrt{\frac{2\rho(\gamma + \epsilon)}{\alpha}} \\
        &\implies \frac{1}{T}\sum_{t=1}^T M_t \leq \sqrt{\frac{2 \rho (\gamma + \epsilon)}{T \alpha}} 
    \end{align*}
    using Cauchy-Schwarz.  Another application of Azuma's inequality tells us that the average expected miscoverage above is more than $\epsilon/T$ away from the realized miscoverage rate with at most probability $2\exp\left(-\frac{\epsilon^2}{2T}\right)$. Taking a union bound over both probabilities, this gives us:
    \begin{equation*}
        |\texttt{Cov}(\Pi_T) - q| \leq \sqrt{\frac{2\rho(\gamma + \epsilon)}{T \alpha}} + \frac{\epsilon}{T}
    \end{equation*}
    with probability at least $1 - 4\exp\left(-\frac{\epsilon^2}{2T}\right)$.
\end{proof}

\discreteineq*
\begin{proof}
    Without loss of generality, assume that $a \leq b$. For any fixed $i \in [T]$, consider the difference $\Delta L_i = l_q(b, \tau_i) - l_q(a, \tau_i)$. There are three cases to consider. If $\tau_i < \min\{a, b\}$, then:
    \begin{equation*}
        \Delta L_i = (1 - q)(b - \tau_i - (a - \tau_i)) = (1 - q)(b- a)
    \end{equation*}
    Similarly, if $\max\{a,b\} \leq \tau_i$, then $\Delta L_i = q (a - b)$. For the third case, consider when $a \leq \tau_i < b$. Then, 
    \begin{align*}
        \Delta L_i = (1-q)(b-\tau_i) - q(\tau_i - a) &= q(a-b) + (b - \tau_i)
    \end{align*}
    Let $N_1, N_2$ and $N_3$ be the number of $i \in [t]$ falling into each of these three cases respectively. We first estimate $N_1$; since $a$ minimizes the sum of pinball losses, it must be one of the two grid-points $\mathcal{A}_n$ closest to the true $q$-th quantile of $\mathcal{D}$ (since the sum of pinball losses is a convex, piece-wise linear function). By the smoothness condition on $\mathcal{D}$, the amount of probability weight on this quantile cannot exceed $\rho$, and so we have $|N_1 - qT| \leq \rho T/2$. This implies that $|(N_2 + N_3) - (1-q)T| \leq \rho T/2$. Using $L_b - L_a \leq \gamma$ and writing this sum in terms of the above variables, 
    \begin{align*}
        \gamma & \geq N_1(1-q)(b-a) + qN_2(a-b) + qN_3(a-b) + \sum_{i: a \leq \tau_i < b} (b-\tau_i) \\
        &= (b-a)(N_1(1-q) - q(N_2 + N_3)) + \sum_{i: a \leq \tau_i < b} (b - \tau_i) \\
        &\geq \sum_{i: a \leq \tau_i < b} (b - \tau_i) 
    \end{align*}
    where the final inequality comes at the optimal values of $N_1 = qT, N_2 + N_3 = (1-q)T$. Using the smoothness condition on $\mathcal{D}$, we can lower-bound the sum by splitting the interval $[a,b]$ into pieces of length $1/r$, getting
    \begin{equation*}
        \sum_{i: a \leq \tau_i < b} (b - \tau_i) \geq \alpha \sum_{i=1}^{\lfloor r|b-a| \rfloor } \frac{i-1}{r} \geq \frac{T \alpha r (b-a)^2}{2} 
    \end{equation*}
    Rearranging, we get 
    \begin{equation*}
        (b - a)^2 \leq \frac{2\gamma}{T \alpha r} \implies |b - a| \leq \sqrt{\frac{2\gamma}{T \alpha r}}
    \end{equation*}
\end{proof}

\regrettocov*
\begin{proof}
    Since each predicted value $\hat{\tau}_t$ is in $\mathcal{A}_n$, we can rewrite regret via the separate contributions over each prediction value:
    \begin{align*}
        r(\Pi_t, -p_q, \phi) = \sum_{\tau \in \mathcal{A}_n} \underbrace{\sum_{t: \hat{\tau}_t = \tau} p_q(\hat{\tau}_t, \tau_t) - p_q(\phi(\hat{\tau_t}), \tau_t)}_{r_{\tau, \phi}}
    \end{align*}
    Define the swap function $\phi_m$ that, for each $\tau \in \mathcal{A}_n$, is defined as:
    \begin{equation*}
        \phi_m(\tau) = \min_{\tau' \in \mathcal{A}_n} \sum_{t: \hat{\tau} = \tau} p_q(\tau', \tau_t)
    \end{equation*}
    as well as the loss minimizer mapping $M: \mathcal{A}_n \to [0,1]$:
    \begin{equation*}
        M(\tau) = \min_{\tau' \in [0,1]} \sum_{t: \hat{\tau} = \tau} p_q(\tau', \tau_t)
    \end{equation*}
    Note that by definition, since $M(\tau)$ minimizes the sum of pinball losses, it is the $q$-th quantile of the empirical distribution $\mathcal{D}_{\tau}$ over the set $\{\tau_t\}_{t: \hat{\tau} = \tau}$. Further, since the sum of pinball losses (as a function of the first argument) is a convex, piece-wise linear function, $\phi_m(\tau)$ must be one of the two closest grid-points in $\mathcal{A}_n$ to $M(\tau)$, i.e. we have $|M(\tau) - \phi_m(\tau)| \leq 1/n$. 
    Since $r_{\tau, \phi} \geq 0$ for each $\tau \in \mathcal{A}_n$, a total swap-regret of $\gamma$ implies that $r_\tau \leq \gamma$ for each $\tau$. Using Lemma \ref{lem:discrete-ineq},
    \begin{equation*}
        |\phi_m(\tau) - \tau| \leq \sqrt{\frac{2\gamma}{T_\tau \alpha r}}
    \end{equation*}
    where we define $T_\tau = \sum_{t \in [T]} \one[\hat{\tau}_t = \tau]$. Due to the $(\rho, r)$-smoothness condition over $\mathcal{D}_{\tau}$, the amount of probability weight on $M(\tau)$ cannot exceed $\rho$, and so the number of values $N_{\tau}$ in $\{\tau_t\}_{t:\hat{\tau} = \tau}$ that $M(\tau)$ equals or exceeds satisfies $qT_{\tau} - \rho T_{\tau}/2 \leq N_{\tau} \leq qT_{\tau} + \rho T_{\tau}/2$. Finally, using the bound on $|M(\tau) - \tau|$ along with the smoothness condition, the number of values in the set $\{t:\hat{\tau} = \tau\}$ between $M(\tau)$ and $\tau$ cannot exceed $T_\tau \cdot \rho r \cdot\left(\frac{1}{n} + \sqrt{\frac{2\gamma}{T_\tau \alpha r}}\right)$. Using the upper bounds of the inequalities, 
    \begin{equation*}
        \sum_{t: \hat{\tau} = \tau} \one[\tau_t \leq \tau] \leq  qT_{\tau} + \frac{\rho T_{\tau}}{2} + \frac{\rho r T_\tau}{n} + \sqrt{\frac{2\gamma T_\tau}{\alpha r}}
    \end{equation*}
    Notice that the left hand side equals $\texttt{Cov}(\Pi_T, G_\tau)$, where $G_\tau$ is the binary group including all time-steps $t$ for which $\hat{\tau}_t = \tau$. Thus, performing the same steps using the lower bounds of the inequality and dividing by $T_\tau$,
    \begin{equation*}
        \left| \texttt{Cov}(\Pi_T, G_\tau) - q\right| \leq \frac{\rho}{2} + \frac{\rho r}{n} +\sqrt{\frac{2\gamma}{T_\tau \alpha r}}
    \end{equation*}
\end{proof} 

\covtoregret*
\begin{proof}
    Fix a threshold $\tau \in \mathcal{A}_n$. Let $M(\tau) = \min_{a \in [0,1]}| \texttt{Cov}(\Pi_T, G_{\tau}) - q|$ where $G_\tau$ is the binary group including all time-steps for which the predicted threshold was $\tau$. Note that since by definition $M(\tau)$ is the $q$-th quantile of the empirical distribution $\mathcal{D}_\tau$, it is also the value $a$ that minimizes the sum of pinball losses $\sum_{t=1}^T \one[\hat{\tau}_t = \tau] \cdot p_q(a, \tau_t)$. We assume without loss of generality that this exact $q$-th quantile over $\mathcal{D}_\tau$ exists - since any other value would get worse miscoverage (with respect to the desired rate $q$), $\tau$ achieving comparable performance to the true minimizer implies it would achieve the same (or better) performance with respect to $M(\tau)$ even if the exact $q$-th quantile did not exist. By the coverage error guarantee, $$\p_{\tau \in \mathcal{D}_\tau}(\tau \in [M(\tau),  \tau]) \leq \gamma$$ or instead $[\tau, M(\tau)]$, based on their ordering.  Using the smoothness condition, we have:
    \begin{equation*}
        |\tau - M(\tau)| \leq \frac{\gamma}{\alpha r}
    \end{equation*}
    Assume without loss of generality that $M(\tau) \leq \tau$. Define the variables $N_1, N_2$ and $N_3$ as the number of thresholds in the set $\{\tau_t\}_{t: \hat{\tau}_t = \tau}$ less than or equal to $M(\tau)$, in the interval $(M(\tau), \tau]$, and greater than $\tau$ respectively. Since $M(\tau)$ is exactly the $q$-th quantile, $N_1 = qT_\tau$ and $N_2 + N_3 = (1-q)T_\tau$, where we define $T_\tau = \sum_{t=1}^T \one[\hat{\tau_t} = \tau]$. We can rewrite the difference in pinball loss by dividing into these three categories, as in the proof for Lemma $\ref{lem:discrete-ineq}$, to get:
    \begin{align*}
        \sum_{t: \hat{\tau}_t = \tau} p_q(\tau, \tau_t) - p_q(M(\tau), \tau_t) &= N_1(1-q)(\tau - M(\tau)) - qN_2(\tau - M(\tau)) + qN_3(\tau - M(\tau))+ \sum_{i: a \leq \tau_i < b, \hat{\tau}_i = \tau} (\tau - \tau_i) \\
        &= (\tau - M(\tau))(N_1(1-q) - q(N_2 + N_3)) + \sum_{i: M(\tau) \leq \tau_i < \tau, \hat{\tau}_i = \tau} (\tau - \tau_i) \\
        &\leq N_3 (\tau - M(\tau)) \leq T_\tau \frac{\gamma \rho r}{\alpha r} \cdot \frac{\gamma}{\alpha r} = T_\tau \frac{\gamma^2 \rho}{\alpha^2 r}
    \end{align*}
    using the smoothness condition and the bound on $|M(\tau) - \tau|$ to bound the value of $N_3$. Thus the maximal regret with respect to the best action in hindsight $M(\tau)$ over the subsequence where only prediction $\tau$ is made can be bound:
    \begin{align*}
        r_{\tau} &= \max_{a \in \mathcal{A}_n} \sum_{t: \hat{\tau}_t = \tau} p_q(\tau, \tau_t) - p_q(a, \tau_t) \\
        &\leq \sum_{t: \hat{\tau}_t = \tau} p_q(\tau, \tau_t) - p_q(M(\tau), \tau_t) \leq  T_\tau \frac{\gamma^2 \rho}{\alpha^2 r}
    \end{align*}
    The set of subsequences defined by $G_\tau$ (across all $\tau \in \mathcal{A}_n$) forms a partion over the full transcript; summing the above inequality for $r_\tau$ across all $\tau \in \mathcal{A}_n$, 
    \begin{equation*}
        r(\Pi_t, -p_q, \phi) \leq \frac{T \gamma^2 \rho}{\alpha^2 r}
    \end{equation*}
\end{proof}




\counterexample*
\begin{proof}
Assume that $g_1 = 0$. Since $\tau_t = 1$, whenever $\theta_t \cdot g_t < 1$ we will trigger update A. Assume all rounds up through round $t-1$ triggered update $A$, in which case  $\theta_{t} = \eta \cdot q \cdot \sum_{k=1}^{t-1} g_k < 2\sqrt{t-1}$. But because we set $g_t = \frac{1}{2\sqrt{t-1}}$ we have that $\theta_t \cdot g_t < 1$, once again triggering update $A$. Inductively, update $A$ is thus triggered at every round, and so we have that  $\theta_{T+1} = \eta \cdot q \cdot \sum_{k=1}^{T} g_k = \Omega(\sqrt{T})$.
\end{proof}

\end{document}